\def\eqref#1{equation~\ref{#1}}
\def\1{\bm{1}}
\def\vzero{{\bm{0}}}
\def\va{{\bm{a}}}
\def\vd{{\bm{d}}}
\def\ve{{\bm{e}}}
\def\vm{{\bm{m}}}
\def\vr{{\bm{r}}}
\def\vx{{\bm{x}}}
\def\vy{{\bm{y}}}
\def\mA{{\bm{A}}}
\def\mD{{\bm{D}}}
\def\mE{{\bm{E}}}
\def\mH{{\bm{H}}}
\def\mI{{\bm{I}}}
\def\mM{{\bm{M}}}
\def\mR{{\bm{R}}}
\def\mU{{\bm{U}}}
\def\mV{{\bm{V}}}
\def\mW{{\bm{W}}}
\def\mX{{\bm{X}}}
\def\mSigma{{\bm{\Sigma}}}
\DeclareMathAlphabet{\mathsfit}{\encodingdefault}{\sfdefault}{m}{sl}
\SetMathAlphabet{\mathsfit}{bold}{\encodingdefault}{\sfdefault}{bx}{n}
\def\gC{{\mathcal{C}}}
\def\gH{{\mathcal{H}}}
\def\gK{{\mathcal{K}}}
\def\gO{{\mathcal{O}}}
\newcommand{\E}{\mathbb{E}}
\newcommand{\R}{\mathbb{R}}
\newcommand{\normltwo}{L^2}
\DeclareMathAlphabet{\mathbbold}{U}{bbold}{m}{n}
\theoremstyle{plain}
\newtheorem{theorem}{Theorem}[section]
\newtheorem{proposition}[theorem]{Proposition}
\newtheorem{lemma}[theorem]{Lemma}
\theoremstyle{definition}
\newtheorem{definition}[theorem]{Definition}
\newtheorem{assumption}[theorem]{Assumption}
\theoremstyle{remark}
\xpatchcmd\algorithmic{\leftmargin\labelwidth}{\leftmargin-\labelsep}{}{}
\newcommand{\paragraphbe}[1]{\noindent{\bf  #1 }}
\newcommand{\param}{\bm{\theta}}
\newcommand{\activeclients}{\mathcal{K}}
\newcommand{\updates}{\vd}
\newcommand{\noise}{\bm{\zeta}}
\title{Momentum Approximation in Asynchronous Private Federated Learning}
\author{%
  Tao Yu\thanks{Work was done while interning at Apple.}\\
  Amazon\\
  \And
  Congzheng Song \\
  Apple \\
  \And
  Jianyu Wang \\
  Apple \\
  \And
  Mona Chinits \\
  Apple \\
}
\begin{document}

\maketitle

\begin{abstract}
Asynchronous protocols have been shown to improve the scalability of federated learning (FL) with a massive number of clients. Meanwhile, momentum-based methods can achieve the best model quality in synchronous FL. However, naively applying momentum in asynchronous FL algorithms leads to slower convergence and degraded model performance. It is still unclear how to effective combinie these two techniques together to achieve a win-win. In this paper, we find that asynchrony introduces implicit bias to momentum updates. In order to address this problem, we propose momentum approximation that minimizes the bias by finding an optimal weighted average of all historical model updates.
Momentum approximation is compatible with secure aggregation as well as differential privacy, and can be easily integrated in production FL systems with a minor communication and storage cost. We empirically demonstrate that on benchmark FL datasets, momentum approximation can achieve $1.15 \textrm{--}4\times$ speed up in convergence compared to naively combining asynchronous FL with momentum.  
\end{abstract}

\section{Introduction}
Practical deployment of synchronous federated learning (SyncFL)~\cite{mcmahan2017communication} encounters scalability issue due to the requirement on global synchronization of clients' model updates, wherein the central aggregation are contingent upon the completion of local training and communication across all participating clients. 
In order to address this issue, asynchronous FL (AsyncFL)~\cite{xie2019asynchronous,van2020asynchronous,park2021sageflow,chai2021fedat,nguyen2022federated,zhang2023no} was proposed, which allows concurrent model updates at both the server and the clients' side.
One concrete example is FedBuff~\cite{nguyen2022federated}, which is the state-of-the-art AsyncFL method and has been deployed in many production systems~\cite{huba2022papaya,wang2023flint}. 
In each FedBuff iteration, the server first broadcasts the global model and triggers local training on $K$ randomly sampled clients, then, receives clients' local model updates in a buffer. Once the buffer reaches a target cohort size $C \ll K$, the server will directly proceed to the next iteration without waiting for the all $K$ clients finish computation. 
As a result, the buffer gets filled up much quicker than SyncFL and the latency per iteration improves significantly~\citep{dutta2021slow}. 
However, since clients sampled in all previous iterations can contribute to the current global model update via stale gradients, AsyncFL methods typically have slower model convergence w.r.t iterations than SyncFL.

On the other hand, momentum-based optimizers such as momentum SGD and Adam have become dominant in the deep learning community due to their superior performance. Similar observations also appeared in SyncFL. For example, researchers found that applying momentum methods for the server model updates (e.g., FedAvgM~\cite{hsu2019measuring} and FedAdam~\cite{reddi2020adaptive}) can greatly improve the final model quality and convergence speed. 

Given the appealing benefits of asynchrony and momentum, it is desired to combine them to achieve a win-win in both efficiency and model quality. Unfortunately, the naive combination does not work. For instance, \cite{mitliagkas2016asynchrony,zhang2017yellowfin} showed that sophisticated tuning of the momentum parameter $\beta$ is very critical in asynchronous SGD (AsyncSGD). Rather than consistently using a large $\beta$ (e.g. 0.9) in the synchronous setting, a smaller or even negative $\beta$ is preferred and the best value may vary across datasets.
We observe the same phenomenon for AsyncFL. As shown in Figure~\ref{fig:implicit-momentum-results}, both FedAvgM and FedAdam with $\beta=0.9$ underperforms smaller $\beta$ in asynchronous setting while the pattern is the opposite if updates are synchronous.

\begin{figure}[t]
\centering
\includegraphics[height=0.265\linewidth]{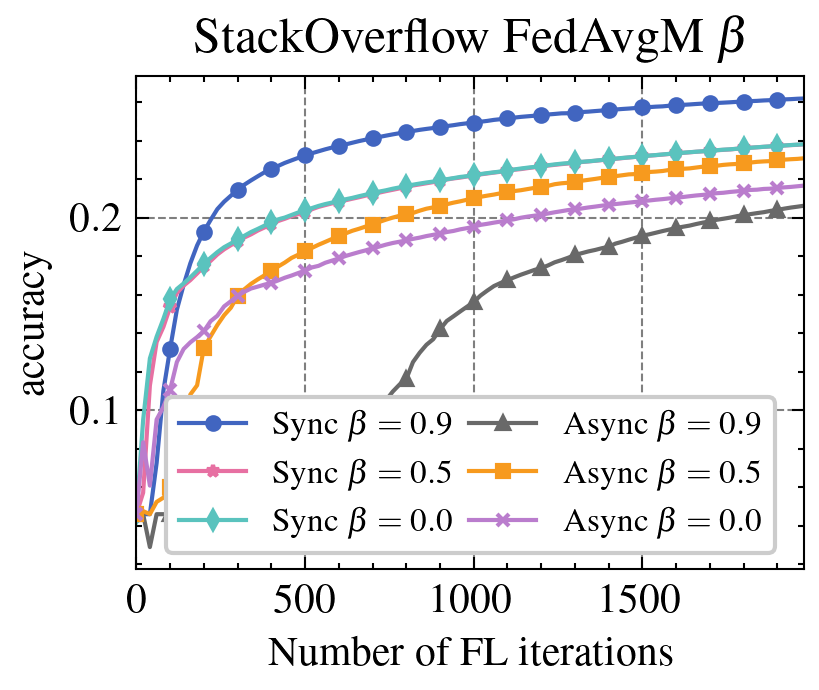}
\includegraphics[height=0.265\linewidth]{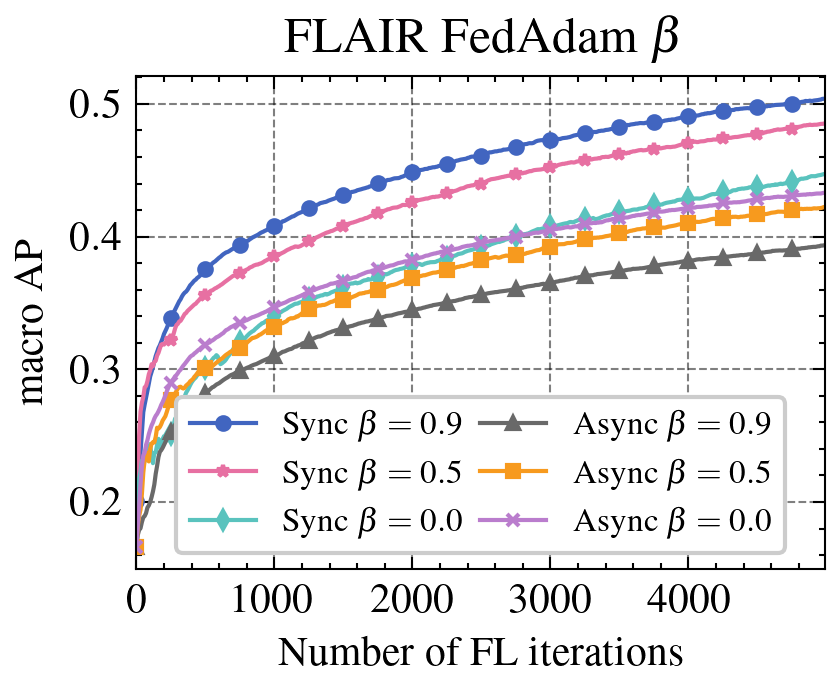}
\includegraphics[height=0.265\linewidth]{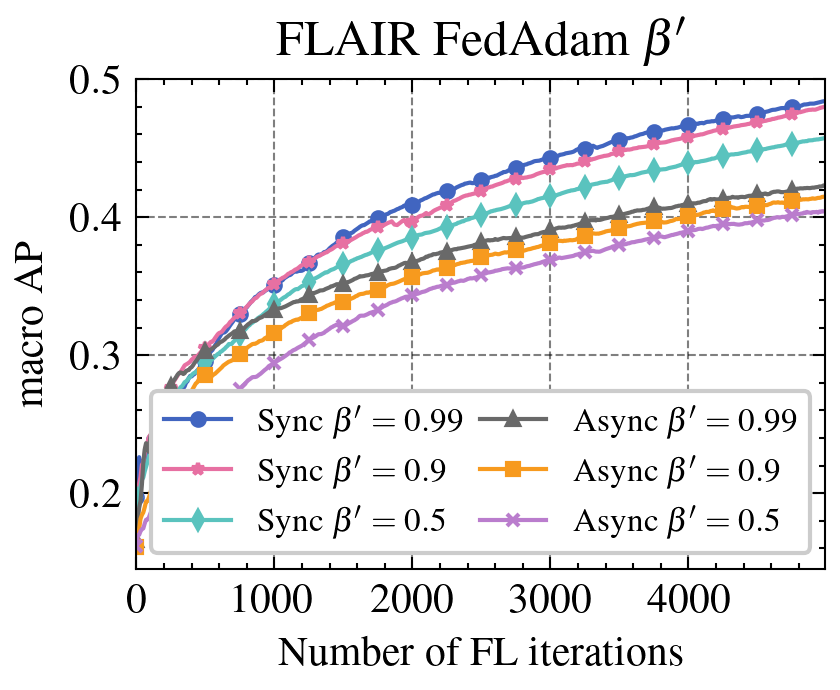}
\footnotesize
\caption{
(Left and Middle) In SyncFL, FedAvgM and FedAdam with momentum parameter $\beta=0.9$ converges fastest while it is not the case in AsyncFL: no momentum ($\beta=0$) or smaller $\beta=0.5$ is better.
(Right) The parameter $\beta^\prime$ for the second moments in FedAdam, on the other hand, has consistent impact on SyncFL and AsyncFL, i.e. larger $\beta^\prime=0.99$ is better.
}
\label{fig:implicit-momentum-results}
\end{figure}

Prior AsyncFL works proposed to down-scale the stale client updates before aggregation 
to control the impact of staleness~\cite{xie2019asynchronous,park2021sageflow}. 
However, this does not help in combining asynchrony and momentum. 
As shown in the experiments of FedBuff~\citep{nguyen2022federated}, even with lower weights for stale updates, $\beta$ still needs to be carefully tuned on different datasets and the best value can be $0$ (i.e. no momentum).
It remains an open question: \emph{is it possible to effectively integrate asynchrony and momentum in FL to simultaneously harness the advantages in scalability and better model quality?} 


\paragraphbe{Contributions.}  
In this paper, we provide an affirmative answer to the above question. Motivated by the fact that momentum method itself utilizes all past gradients by taking an exponential average (i.e., has unbounded staleness), we argue that the key issue in applying momentum to AsyncFL may not be the large staleness of model updates. Instead, the real problem is that naive asynchronous momentum method does not properly exploit the past information. We demonstrate that asynchrony introduces implicit weight bias to past gradients and hence, removes the momentum effect.

In order to address this problem, we propose a new algorithm named \emph{momentum approximation}, which solves a least square problem in each FL iteration $t$ to find the best coefficients to weight the historical model updates before $t$, such that the weighted historical updates are close to the momentum updates as in the synchronous setting and thus retain the acceleration from momentum. 
We highlight some key features of this algorithm: 
\begin{itemize}[leftmargin=3mm,topsep=0pt]
\item Momentum approximation is compatible with any momentum-based federated optimizer and its convergence pattern behaves similar to SyncFL. 
It resolves the need of extensively tuning $\beta$ from a wider range for different tasks in prior works. One can consistently set $\beta$ found in SyncFL to get the best model quality in the AsyncFL.
\item Momentum approximation can be easily integrated in production FL systems, and inherits all the benefits from FedBuff, such as its scalability, robustness and compatibility to privacy. 
It incurs only a minor communication of a iteration number in addition to model updates, and storage cost of historical updates on the server side. 
\item We empirically demonstrate that on two large-scale FL benchmarks, StackOverflow~\citep{stackoverflow} and FLAIR~\citep{song2022flair}, momentum approximation achieves $1.15 \textrm{--}4\times$ speed up in convergence and $3\textrm{--}20\%$ improvements in utility compared to vanilla FedBuff with momentum. 
\end{itemize}

\section{Background}
In federated learning (FL), we aim to train a model $\param\in\R^d$ with $m$ clients collaboratively.
In iteration $t$ of FL, a cohort of clients is sampled and the server broadcasts the current global model $\param_t$ to the sampled clients $\activeclients_t$. 
Each sampled client $k$ trains on their local dataset, and then submits the model updates $\Delta_k(\param_t)$ before and after the local training back to the server. 
In SyncFL, the server waits for the local model updates from all $K = |\activeclients_t|$ clients and uses the \emph{aggregated model updates}
$\updates_t=\frac{1}{K}\sum_{k \in \activeclients_t}\Delta_k(\param_t)$
to update the global model before proceeding to the next iteration. 
More formally, synchronous federated averaging (FedAvg)~\cite{mcmahan2017communication} algorithm updates the global model as $\param_{t+1} = \param_t - \eta \updates_t$
where $\eta$ denotes the server learning rate.


\paragraphbe{Momentum-based optimizers.}
In practice, momentum-based optimizers~\cite{wang2019slowmo,hsu2019measuring,reddi2020adaptive} on the server side are often more preferred than FedAvg as they can either greatly accelerate convergence or improve the final model quality given a fixed iteration budget. We denote these optimizers as \textsc{ServerOpt}, and the update rule of which can be formulated as:
\begin{align}
\vm_t = \beta\vm_{t-1} + (1-\beta)\updates_t,\,
\param_{t+1}=\param_t - \eta \mH_t^{-1}\vm_t \label{eq:momentum},
\end{align}
where $\beta\in [0, 1)$ is the momentum parameter and $\vm_t$ is the momentum buffer.
$\mH_t^{-1}$ is the preconditioner where $\mH_t=\mI_d$ in FedAvgM~\cite{hsu2019measuring}, and $\mH_t$ is the square root of accumulated or exponential moving average of $\updates_t$'s second moments in adaptive optimizers such as FedAdaGrad and FedAdam~\cite{duchi2011adaptive,kingma2014adam,reddi2020adaptive}. 

\paragraphbe{FL with differential privacy.}
Though the clients' raw data is never shared with the server in FL, the model updates $\Delta_k$ can still reveal sensitive information~\cite{melis2019exploiting,zhu2019deep, nasr2019comprehensive}. 
Differential privacy (DP) is a standard approach to prevent leakage from $\Delta_k$ and provide a meaningful privacy guarantee.
\label{sec:dp}
\begin{definition}[Differential Privacy \cite{dwork2006calibrating}]
A randomized algorithm $\mathcal{A}:\mathcal{D}\mapsto\mathcal{R}$ is ($\epsilon,\delta$)-differentially private, if for any pair of neighboring training populations $\mathcal{D}$ and $\mathcal{D^\prime}$ and for any subset of outputs $\mathcal{S}\subseteq\mathcal{R}$, it holds that
\begin{equation}
	\mathrm{Pr}[\mathcal{A}(\mathcal{D})\in\mathcal{S}] \leq e^\epsilon\cdot \mathrm{Pr}[\mathcal{A}(\mathcal{D^\prime})\in\mathcal{S}] + \delta.
\label{eq:dp}
\end{equation}
\end{definition}
We consider client-level DP where a training population $\mathcal{D^\prime}$ is the neighbor of $\mathcal{D}$ if $\mathcal{D^\prime}$ can be obtained by adding or removing one client from $\mathcal{D}$, and vice versa.
Gaussian Mechanism~\cite{dwork2006our} can be easily combined with FL~\cite{mcmahan2018learning} to enable DP, where two more additional steps are required in each iteration: (1) each client model update is clipped by $\mathrm{clip}(\Delta_k, S)=\Delta_k\cdot\min(1, S / \lVert\Delta_k\rVert_2)$ with $\normltwo$ sensitivity bound $S$, and (2) the aggregated clipped model updates are added with Gaussian noise as $\sum_k\mathrm{clip}(\Delta_k, S) + \mathcal{N}(0, \sigma^2 S^2\mI)$ where $\sigma$ is calibrated from a standard privacy accountant such as R\'enyi DP~\cite{mironov2017renyi}. 
We also assume a secure aggregation protocol is used so that the server learns only the sum $\sum_k \Delta_k$ but never the individual model updates $\Delta_k$~\cite{bonawitz2017practical,huba2022papaya,talwar2023samplable}.

\section{Applying Momentum to Asynchronous FL}
\label{section:ma}

In this section, we first demonstrate the problem in naively combining momentum and AsyncFL, and then introduce momentum approximation to address it. 
Unless otherwise stated, we focus on the FedBuff algorithm, which is a general and state-of-the-art  AsyncFL algorithm. 


\paragraphbe{Notation and assumptions.}
For a matrix $\mA$, we use $\mA_{[i, :]}$ to denote $i$-th row, $\mA_{[:, j]}$ the $j$-th column, and $\mA_{[i,j]}$ the $(i,j)$-th entry of $\mA$.
We denote the staleness as $\tau(k)$ for client $k$, i.e. $k$ is sampled at iteration $t-\tau(k)$ and their updates is received at $t$.
We let $\ve_i\in\{0, 1\}^T$  denote the one-hot encoding vector where all entries are 0 except for the $i$-th entry being 1, and $\bm{1} = [1,1,\dots,1]^\top$ denote a vector with all ones.
We denote $\gK_t$ as the set of $K$ clients sampled at iteration $t$, and $\gC_t$ as the set of $C$ clients whose updates received by server at iteration $t$. 

To gain insights on the impact of momentum in AsyncFL, we  define  $\vd^\star_t = \frac{1}{m}\sum_{k=1}^m \Delta_k(\param_t)$,
i.e. the average of local model updates over all $m$ clients starting from the same point $\param_t$. 
We make the following assumptions throughout.

\begin{assumption}[Bounded Population Client Update]
For each iteration $t$, $\lVert \updates_t^\star \rVert^2_2 \leq S^2$.
\label{assumption:norm}
\end{assumption}

\begin{assumption}[Bounded  Global Dissimilarity]
For all clients $k\in[m]$ and for each iteration $t$, $\E_{k\sim [m]}\lVert \Delta_k(\param_t)-\vd^\star_t \rVert^2_2 \leq G^2$.
\label{assumption:var}
\end{assumption}


\begin{assumption}(Bounded Client Subset Sampling Error)
\label{assumption:}
For the sampled clients set $\gK_t$ in each iteration $t$, $\E_{k\sim\gK_t}\lVert\Delta(\param_t)_k-\updates_t^\star \rVert_2^2 \leq \lVert\vd_t-\vd^\star_t\rVert_2^2 + \rho ^2$.
\label{assumption:bias}
\end{assumption}

\begin{assumption}(Random Arrival Order of Sampled Clients)
\label{assumption:random}
For each iteration $t$ and $s\leq t$, each client $k\in\gK_{t,s}$ is a random sample from the set $\gK_s$.
\end{assumption}

Assumptions~\ref{assumption:norm} and~\ref{assumption:var} are common in the FL literature~\cite{wang2019adaptive,li2020federated, yang2021achieving, nguyen2022federated}.
Assumptions~\ref{assumption:norm} also trivially holds with DP.
Assumption~\ref{assumption:bias} is a natural extension given Assumption~\ref{assumption:var} and $\rho^2\leq G^2$. 
Assumption~\ref{assumption:random} is based on the fact the timing of client participation tends to be random among sampled clients. We justify Assumption~\ref{assumption:random} in detail in Appendix~\ref{app:discuss}. 

\begin{figure*}[t]
\centering
\includegraphics[width=\linewidth]{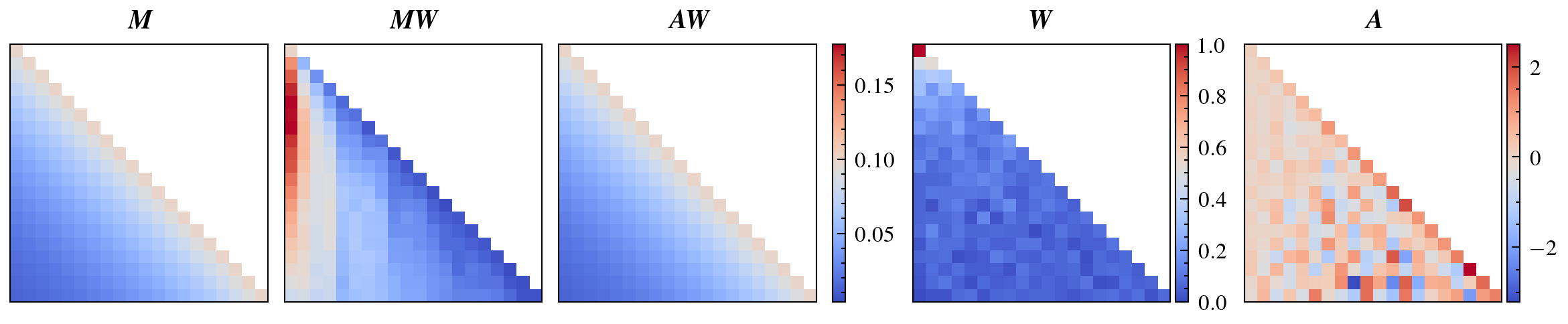}
\vspace{-1em}
\footnotesize
\caption{Visualization of the desired momentum matrix $\mM$ ($\beta=0.9$), the implicit momentum matrix $\mM\mW$, the approximated momentum matrix $\mA\mW$, the staleness coefficient matrix $\mW$, and the solved weighting matrix $\mA$ in momentum approximation.}
\label{fig:implicit-momentum}
\end{figure*}

\subsection{Implicit Momentum Bias}
In order to get a better understanding on the convergence issue of AsyncFL with momentum, we first present a general update rule for FL and then compare synchronous momentum methods and asynchronous ones as special cases.

Without loss of generality, we define $\bm{r}_t \in \mathbb{R}^d$ as the aggregated pseudo-gradient (or model updates) received by the server at iteration $t$, and denote $\bm{R} = [\bm{r}_1, \bm{r}_2, \ldots, \bm{r}_T] \in \mathbb{R}^{d \times T}$. Then, the following proposition holds~\cite{denisov2022improved}.
\begin{proposition}
    Suppose the server model $\param$ is updated using momentum method as follows:
    \begin{align*}
        \bm{m}_t &= \beta \bm{m}_{t-1} + (1-\beta) \bm{r}_t,\,
        \param_{t+1} = \param_t - \eta \bm{m}_t.
    \end{align*}
    This update rule is equivalent to $\param_{t+1} = \param_t - \eta (1-\beta) \sum_{s=1}^{t} \beta^{t-s} \bm{r}_s$. The final model after total $T$ iterations can be written as:
    \begin{align}
        \param_{T+1} = \param_1 - \eta \bm{R}\bm{M}^\top\bm{1}, \label{eqn:general_update}
    \end{align}
    where $\mM \in \mathbb{R}^{T \times T}$ is a lower-triangular matrix defined as:
        $
    \mM_{[t,s]} = \begin{cases}
    \beta^{t-s}(1-\beta) & \textnormal{if } t >= s\\
    0 & \textnormal{otherwise}
    \end{cases}.
    $
\end{proposition}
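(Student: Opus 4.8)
The plan is to prove the three claims in sequence, each building on the previous one: (i) the momentum update rule unrolls into a weighted sum of past pseudo-gradients $\bm{r}_s$; (ii) this weighted sum can be written compactly using the matrix $\mM$; (iii) telescoping the per-iteration updates over $t=1,\dots,T$ yields the closed form for $\param_{T+1}$.

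First I would establish the unrolled form of $\bm{m}_t$. Starting from $\bm{m}_t = \beta\bm{m}_{t-1} + (1-\beta)\bm{r}_t$ with the natural initialization $\bm{m}_0 = \bm{0}$, I would expand the recursion by induction on $t$: assuming $\bm{m}_{t-1} = (1-\beta)\sum_{s=1}^{t-1}\beta^{t-1-s}\bm{r}_s$, substitute to get $\bm{m}_t = \beta(1-\beta)\sum_{s=1}^{t-1}\beta^{t-1-s}\bm{r}_s + (1-\beta)\bm{r}_t = (1-\beta)\sum_{s=1}^{t}\beta^{t-s}\bm{r}_s$. This immediately gives $\param_{t+1} = \param_t - \eta(1-\beta)\sum_{s=1}^t \beta^{t-s}\bm{r}_s$, which is the first equivalence claimed. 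The base case $t=1$ is just $\bm{m}_1 = (1-\beta)\bm{r}_1$, consistent with the formula.

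Next I would telescope. Summing $\param_{t+1} - \param_t = -\eta(1-\beta)\sum_{s=1}^t \beta^{t-s}\bm{r}_s$ over $t = 1,\dots,T$ collapses the left side to $\param_{T+1} - \param_1$, giving
\begin{align*}
\param_{T+1} = \param_1 - \eta(1-\beta)\sum_{t=1}^T\sum_{s=1}^t \beta^{t-s}\bm{r}_s.
\end{align*}
Now I would recognize the double sum as a matrix-vector product. Observe that $\sum_{s=1}^t \beta^{t-s}(1-\beta)\bm{r}_s = \sum_{s=1}^T \mM_{[t,s]}\bm{r}_s$ since $\mM_{[t,s]} = 0$ for $s > t$; this is exactly the $t$-th column of $\bm{R}\mM^\top$, i.e. $(\bm{R}\mM^\top)_{[:,t]} = \bm{R}\,\mM^\top_{[:,t]} = \sum_s \mM_{[t,s]}\bm{r}_s$. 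Summing over $t$ is then right-multiplication by $\bm{1}$: $\sum_{t=1}^T (\bm{R}\mM^\top)_{[:,t]} = \bm{R}\mM^\top\bm{1}$. Hence $\param_{T+1} = \param_1 - \eta\,\bm{R}\mM^\top\bm{1}$, as required.

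The argument is essentially bookkeeping, so there is no deep obstacle; the only place to be careful is the index juggling when identifying the double sum with $\bm{R}\mM^\top\bm{1}$ — specifically making sure the lower-triangularity convention ($\mM_{[t,s]}=0$ for $t<s$) lines up correctly so that extending the inner sum from $s\le t$ to $s\le T$ is harmless, and that the transpose is placed correctly (rows of $\mM$ index the update iteration $t$, columns index the source iteration $s$). I would also note explicitly the implicit assumption $\bm{m}_0 = \bm{0}$, since the closed form depends on it. Everything else follows from the geometric structure of the coefficients $\beta^{t-s}(1-\beta)$.
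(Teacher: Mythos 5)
Your proof is correct and is the standard unrolling-plus-telescoping argument; the paper itself states this proposition without proof (citing prior work), and your derivation is exactly the bookkeeping that justification implicitly relies on, including the necessary initialization $\bm{m}_0=\bm{0}$ and the correct placement of the transpose so that rows of $\mM$ index the update iteration. Nothing is missing.
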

With the above general update rule, both SyncFL and AsyncFL can be treated as special cases. 
For SyncFL with all clients participating, the received (pseudo-) gradient on server is just the aggregated local model updates from all $m$ clients, that is, $\bm{r}_t = \bm{d}^\star_t$. 
For SyncFL with client sub-sampling, $\bm{r}_t = \bm{d}_t$.
Denote $\mD^\star = [\vd^\star_1, \vd^\star_2, \dots, \vd^\star_T]$ and $\mD=[\vd_1, \vd_2, \dots, \vd_T]$, then
\begin{align}
    \param_{T+1}^\star = \param_1 - \eta \mD^\star \mM^\top \bm{1}, \quad 
    \param_{T+1}^{\text{sync}} = \param_1 - \eta \mD \mM^\top \bm{1}. \label{eqn:sync_update}
\end{align}

For asynchronous setting, the concrete expression of $\bm{r}_t$ is more complicated. At each iteration of FedBuff, the server broadcasts the latest model $\param_t$ to $K$ random sampled clients to trigger local training and applies a global update after receiving $C$ local model updates from $\gC_t$. 
However, these received local model updates can be stale. 
Let $\gK_{t,s}$ be the set of clients sampled at iteration $s$ with updates received at iteration $t$ where $|\gK_{t,s}| = C_{t,s} \geq 0$ and $\sum_{s=1}^{t} C_{t,s}=C$. 
Then, the current (pseudo-) gradient on server can be written as a weighted average of all past updates:
\begin{align}
    \bm{r}_t  &= \frac{1}{C}\sum_{k\in\gC_t}(\tau(k) + 1)^{-p}\Delta_{k}(\param_{t-\tau(k)}) = \sum_{s=1}^{t}\frac{(\tau + 1)^{-p}}{C}\sum_{k\in\gK_{t,s}}\Delta_{k}(\param_{s}) \nonumber \\
    & = \sum_{s=1}^{t}(\tau + 1)^{-p}\frac{C_{t,s}}{C}(\vd^\star_s + \frac{1}{C_{t,s}}\sum_{k\in\gK_{t,s}}\Delta_{k}(\param_{s}) - \vd^\star_s) = \sum_{s=1}^t (\tau + 1)^{-p}  \frac{C_{t,s}}{C} (\bm{d}^\star_s + \noise_{t,s}), \label{eqn:async_r}
\end{align}
where $(\tau + 1)^{-p}$ is a down-scaling factor commonly used in AsyncFL to mitigate the impact of staleness $\tau=t-s$ on model updates~\cite{xie2019asynchronous,park2021sageflow,nguyen2022federated}, and $\bm{d}^\star_s$ has the same definition as of in the synchronous setting above.
Besides, since the server only receives a subset of $C_{t,s}$ individual local updates out of the sampled $K$ clients at iteration $s$, there is an extra sampling error denoted as $\noise_{t,s}$. We can define a weight matrix similar to $\mM$:
\[
\mW_{[t,s]}=\begin{cases}
(t-s + 1)^{-p} C_{t,s} / C  & \text{if } t >= s\\
0 & \text{otherwise}.
\end{cases}
\]

Then, one can easily derive that $\mR = \mD^\star\mW^\top + \mE$
where the $t$-th column of error matrix $\mE$ is defined as $\sum_{s=1}^t (\tau+1)^{-p}\frac{C_{t,s}}{C} \noise_{t,s}$. Substituting $\mR$ back into (\ref{eqn:general_update}), we get
\begin{align}
    \param_{T+1}^{\text{async}} 
    =& \param_1 - \eta \bm{D}^\star\mW^\top \bm{M}^\top\bm{1} -\eta \mE\mM^\top\bm{1} \nonumber \\
    =& \param_1 - \eta \mD^\star \left[\mM^\top + \underbrace{(\mM \mW- \mM)^\top}_{\text{implicit momentum bias}}\right]\bm{1} - \underbrace{\eta \mE \mM^\top \bm{1}}_{\text{async. sampling bias}} .\label{eqn:async_update}
\end{align}
Comparing the update rules \Cref{eqn:sync_update,eqn:async_update}, there are two additional terms in asynchronous setting. One is the implicit momentum bias: the algorithm implicitly assigns biased weights $\mM\mW$ (which is different from normal momentum weight $\mM$) to historical gradients, losing the benefits of momentum acceleration. We provide a visualization of $\mM$ and $\mM\mW$ in Figure~\ref{fig:implicit-momentum}. While the normal momentum assigns the largest weight to the most recent gradients, asynchronous momentum tends to weigh more towards stale gradients, as they arrive more frequently.
The second additional term in \Cref{eqn:async_update} is the asynchronous sampling bias: the server sampled $K$ clients from $s$-th iteration but can only received $C_{t,s}$ from the cohort at iteration $t$. 

Previous works~\cite{mitliagkas2016asynchrony} also observed that giving additional lower weights (e.g. set $p > 0$) to historical gradients can help convergence. This phenomenon can be intuitively explained by the definition of implicit momentum bias, which becomes smaller when $\mW$ approaches to the identity matrix.
However, this approach cannot entirely solve the problem. It is nearly impossible to set $\mW = \mI$ in realistic settings, as the current gradients may only arrive in future iterations.

\begin{theorem}
\label{theorem:async}
For SyncFL and AsyncFL with momentum, by choosing $\eta=\gO(\frac{1}{\sqrt{T}})$,
\begin{align}
\E\lVert\frac{1}{T}(\param_{T+1}^\star - \param_{T+1}^\textnormal{sync})\rVert^2_2 &\leq \frac{1}{2}\eta^2TG^2 = \gO(G^2), 
\\
\E\lVert\frac{1}{T}(\param_{T+1}^\star - \param_{T+1}^\textnormal{async})\rVert^2_2 &\leq \eta^2(2TS^2 + TG^2 + 2\frac{\rho^2}{C}) = \gO(S^2+G^2) \label{eq:async_bound}.
\end{align}
\end{theorem}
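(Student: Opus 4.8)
The plan is to argue directly from the closed-form final iterates \eqref{eqn:sync_update} and \eqref{eqn:async_update} and to bound the resulting matrix--vector products in expectation. Subtracting, the synchronous gap is $\param_{T+1}^\star - \param_{T+1}^{\textnormal{sync}} = \eta\,(\mD - \mD^\star)\mM^\top\bm{1}$ and the asynchronous gap is $\param_{T+1}^\star - \param_{T+1}^{\textnormal{async}} = \eta\,\mD^\star(\mM\mW - \mM)^\top\bm{1} + \eta\,\mE\mM^\top\bm{1}$ (the implicit-momentum-bias term plus the asynchronous-sampling-bias term), so after dividing by $T$ it suffices to bound the squared Euclidean norm of each of these vectors. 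The single preliminary computation I would do is for the aggregated momentum weights $\vc := \mM^\top\bm{1}$: summing the geometric series down each column gives $c_s = \sum_{t=s}^{T}\beta^{t-s}(1-\beta) = 1 - \beta^{T-s+1}\in[0,1)$, hence $\lVert\vc\rVert_\infty < 1$, $\lVert\vc\rVert_1 \le T$ and $\lVert\vc\rVert_2^2 \le T$; and the row sums of $\mW$ obey $\sum_s \mW_{[t,s]} = \tfrac{1}{C}\sum_s (t-s+1)^{-p} C_{t,s} \le \tfrac{1}{C}\sum_s C_{t,s} = 1$ since $p \ge 0$. These two entrywise facts keep every subsequent sum linear in $T$.

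For the synchronous bound I would expand $(\mD - \mD^\star)\mM^\top\bm{1} = \sum_s c_s(\vd_s - \vd^\star_s)$, apply Cauchy--Schwarz to get $\lVert \sum_s c_s(\vd_s - \vd^\star_s)\rVert_2^2 \le \big(\sum_s c_s^2\big)\sum_s \lVert \vd_s - \vd^\star_s\rVert_2^2 \le T\sum_s \lVert \vd_s - \vd^\star_s\rVert_2^2$, and then take expectations. Since $\gK_s$ is a uniformly sampled cohort, Jensen applied to the within-cohort average together with \Cref{assumption:var} gives $\E\lVert \vd_s - \vd^\star_s\rVert_2^2 = \E\lVert \tfrac{1}{K}\sum_{k\in\gK_s}(\Delta_k(\param_s) - \vd^\star_s)\rVert_2^2 \le \E_{k\sim[m]}\lVert \Delta_k(\param_s) - \vd^\star_s\rVert_2^2 \le G^2$. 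Dividing by $T^2$ and substituting $\eta = \gO(1/\sqrt{T})$ yields the $\gO(G^2)$ bound.

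For the asynchronous bound I would use $\lVert x + y\rVert_2^2 \le 2\lVert x\rVert_2^2 + 2\lVert y\rVert_2^2$ and treat the two pieces. \emph{(i) Implicit momentum term:} writing $\mD^\star(\mM\mW - \mM)^\top\bm{1} = \sum_s \gamma_s \vd^\star_s$ with $\gamma_s = \sum_t \mW_{[t,s]} c_t - c_s$, the entrywise estimates give $\sum_s |\gamma_s| \le \sum_t c_t \sum_s \mW_{[t,s]} + \sum_s c_s \le 2T$, so by the triangle inequality and \Cref{assumption:norm} ($\lVert \vd^\star_s\rVert_2 \le S$) one gets $\lVert \sum_s \gamma_s \vd^\star_s\rVert_2 = \gO(TS)$ --- this is the source of the $S^2$-order term, and since $\gamma_s$ is not small in general, it is precisely why the asynchronous bound is $\gO(S^2 + G^2)$ and not $\gO(G^2)$. \emph{(ii) Asynchronous sampling term:} I would split each $\noise_{t,s} = \big(\tfrac{1}{C_{t,s}}\sum_{k\in\gK_{t,s}}\Delta_k(\param_s) - \vd_s\big) + (\vd_s - \vd^\star_s)$; by \Cref{assumption:random} the first summand is conditionally mean-zero given $(\param_s, \gK_s)$ with conditional second moment $\lesssim (\lVert \vd_s - \vd^\star_s\rVert_2^2 + \rho^2)/C_{t,s}$ (here \Cref{assumption:bias} controls the within-cohort dispersion), and the second summand is the same cohort bias bounded as in the synchronous step. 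Substituting into $\mE\mM^\top\bm{1} = \sum_t c_t \sum_s (t-s+1)^{-p}\tfrac{C_{t,s}}{C}\noise_{t,s}$, cross terms between distinct (independently sampled) source cohorts vanish in expectation, the surviving diagonal second moments sum over a full buffer --- using $\sum_s C_{t,s} = C$ --- to $\gO((\rho^2 + G^2)/C)$, which is the origin of the $\rho^2/C$ term, and the $(\vd_s - \vd^\star_s)$ contributions add a further $\gO(G^2)$-order term. Collecting the pieces, dividing by $T^2$, and using $\eta = \gO(1/\sqrt{T})$ gives the $\gO(S^2 + G^2)$ bound of \eqref{eq:async_bound}.

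The step I expect to be the real obstacle is (ii): getting the $\rho^2/C$ term rather than a crude $\rho^2$ genuinely relies on \Cref{assumption:random}, so that the per-arrival sampling errors of a fixed source cohort are conditionally uncorrelated and one actually realizes the $1/C$ averaging. The fiddly bookkeeping is the simultaneous interaction of the staleness down-weights $(t-s+1)^{-p}$, the random arrival counts $C_{t,s}$ (which satisfy $\sum_s C_{t,s} = C$ for each receiving iteration $t$ but only $\sum_t C_{t,s} \le K$ for each source iteration $s$), and the momentum weights $c_t$, all while keeping every term $\gO(\cdot)$. The synchronous bound and part (i) are, by contrast, essentially routine weighted Cauchy--Schwarz estimates.
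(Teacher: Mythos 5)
Your proposal is correct and follows essentially the same route as the paper: the same decomposition of the asynchronous gap into the implicit-momentum-bias and sampling-bias terms, and the same probabilistic core (Assumption~\ref{assumption:random} making the per-arrival errors $\vd_{t,s}-\vd_s$ conditionally mean-zero and uncorrelated, with Assumption~\ref{assumption:bias} supplying the $\rho^2/C_{t,s}$ variance and $\sum_s C_{t,s}=C$ yielding the $1/C$ factor — exactly the content of the paper's Lemmas~\ref{lemma:1} and~\ref{lemma:2}). The only difference is bookkeeping: you bound the aggregated coefficient vectors $\mM^\top\bm{1}$ and $\mW^\top\mM^\top\bm{1}$ directly via $\ell_1$/triangle-inequality estimates rather than passing through $\lVert X\bm{1}\rVert_2^2\le T\lVert X\rVert_F^2$ and row-wise Cauchy--Schwarz as the paper does, which if anything gives slightly tighter constants while still implying the stated bounds.
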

We defer the proof to Appendix~\ref{app:proofs}. 
Both SyncFL and AsyncFL have the same sampling bias in the order of $\gO(G^2)$ on average, but AsyncFL introduces an extra $\gO(S^2)$ term due to the implicit momentum bias.
Note that $\param_{T+1}^\star$ can be different for SyncFL and AsyncFL due to different parameter update trajectory, and we focus on comparing to the $\param_{T+1}^\star$ within each algorithm.

\subsection{Proposed Method: Momentum Approximation}
\label{sec:method}
From \Cref{eqn:async_r}, note that, in asynchronous setting, the received (pseudo-) gradient $\vr_t$ is already a weighted average of historical gradients. Therefore, instead of naively applying momentum on top of it, can we simply adjust the weights to imitate the momentum updates? Following this idea, we propose a new update rule for AsyncFL:
\begin{align}
    \param_{t+1} = \param_t - \eta \mR \va_t,
    \label{eq:ma-update}
\end{align}
where $\va_t\in\R^{T}$ is an arbitrary vector weighting the aggregated model updates. 
Accordingly, we have
\begin{align}
    \param_{T+1}^{\text{MA}}
    &= \param_1 - \eta \mR \mA^\top\bm{1}  = \param_1 - \eta \mD^\star \mW^\top \mA^\top \bm{1} - \eta \mE\mA^\top\bm{1} \nonumber \\
    &= \param_1 - \eta \mD^\star [\mM^\top + (\mA\mW - \mM)^\top]\bm{1} - \eta \mE \mA^\top\bm{1},
    \label{eq:ma-update-all}
\end{align}
where $\mA^\top=[\va_1, \va_2, \dots, \va_T]$.
One can choose a matrix $\mA$ such that $\mA\mW \approx \mM$. As a result, the implicit momentum bias is largely removed. 
The resulting algorithm approximates the synchronous momentum method without explicitly adjusting momentum. For this reason, we name the proposed method as \emph{momentum approximation (MA)}.

\paragraphbe{Implementation.}
The practical implementation of the proposed algorithm (outlined in Algorithm~\ref{alg:fedbuf}) is very straightforward. Thanks to the lower-triangular nature of both matrices $\mW$ and $\mM$, we can approximate the momentum matrix $\mM$ row-by-row, i.e., in an online fashion. At iteration $t$, the desired weights for past gradients are given as the $t$-th row of $\mM$ and known beforehand. We seek to optimize the following objective to find the best $\va_t = \va_{\text{opt}}$ to be used in \Cref{eq:ma-update}:
\begin{align}
    \min_{\va \in \R^{T}} \lVert \va^\top \mW - \mM_{[t,:]}\rVert_2^2 
    , \, \text{subject to } \va_{[s]} = 0, \forall s > t \label{eq:obj}.
\end{align}
In each vector $\va_t$, only the first $t$ elements are non-zero such that matrix $\mA$ is enforced to be an lower-triangular matrix. This is because the server cannot use gradients from future iterations. 
Solving \Cref{eq:obj} requires knowing $\mW_{[:t,:t]}$ (the first $t$ rows and columns of $\mW$) which can be obtained by having each received client $k$ to upload a one-hot encoding $\ve_s \in \{0,1\}^T$ of their model version $s$.\footnote{We need to send the one-hot encoding instead of the integer $t-\tau(k)$ to server as one-hot encoding is compatible with secure aggregation and DP to update $\mW$ and raw integer is not.}
More concretely, suppose at iteration $t$, the received updates at server are from a subset of $C$ clients and their model version are denoted as $\{t-\tau(k)\}_{k\in\gC_t}$. 
Then, the matrix $\mW$ is initialized with all 0 and updated online as $\mW_{[t,:]} = \frac{1}{C}\sum_{k\in\gC_t} \ve_{t-\tau(k)}^\top$.
Sending the extra $\ve_{t-\tau(k)}$ adds a negligible communication cost as $\ve_{t-\tau(k)}$ has a payload size of $T$ bits and $T\ll d$ for common FL tasks.

\begin{theorem}
\label{theorem:ma}
Under the condition that $\mW$ is full rank  and $\E\lVert\mA\rVert^2_F=\gO(CT^2)$, for AsyncFL with momentum approximation (MA), by choosing $\eta=\gO(\frac{1}{\sqrt{T}})$,
\begin{align}
\E\lVert\frac{1}{T}(\param_{T+1}^\star - \param_{T+1}^\textnormal{MA})\rVert^2_2 
\leq \eta^2(TG^2 + 2\frac{\rho^2}{TC}\E\lVert\mA\rVert^2_F) = \gO(G^2).
\label{eq:ma_bound_main}
\end{align}
\end{theorem}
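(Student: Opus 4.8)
The plan is to use the full-rank hypothesis on $\mW$ to eliminate the implicit momentum bias \emph{exactly}, and then to bound the single term that survives, namely the asynchronous sampling error $\mE$ filtered through the learned weighting matrix $\mA$. First I would observe that since $\mW$ is lower-triangular and full rank, every leading principal block $\mW_{[:t,:t]}$ is invertible; hence for each $t$ the constraint $\va_{[s]}=0\ (s>t)$ together with the lower-triangularity of $\mM$ turns \Cref{eq:obj} into the unconstrained least-squares problem $\min_{\vu\in\R^{t}}\lVert\vu^\top\mW_{[:t,:t]}-\mM_{[t,:t]}\rVert_2^2$, whose minimizer $\vu^\top=\mM_{[t,:t]}\mW_{[:t,:t]}^{-1}$ achieves zero residual. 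Stacking the optimal rows shows that the matrix the algorithm actually produces is $\mA=\mM\mW^{-1}$, which is again lower-triangular and, crucially, satisfies $\mA\mW=\mM$. Substituting this into \Cref{eq:ma-update-all} kills the $(\mA\mW-\mM)^\top\bm{1}$ term, so comparing with the synchronous reference trajectory in \Cref{eqn:sync_update} leaves $\param_{T+1}^\star-\param_{T+1}^{\mathrm{MA}}=\eta\,\mE\mA^\top\bm{1}$, and the problem reduces to bounding $\tfrac{\eta^2}{T^2}\E\lVert\mE\mA^\top\bm{1}\rVert_2^2$.

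Next I would split the per-round sampling error as $\noise_{t,s}=(\vd_s-\vd^\star_s)+\bm{\xi}_{t,s}$, where $\bm{\xi}_{t,s}:=\tfrac{1}{C_{t,s}}\sum_{k\in\gK_{t,s}}\Delta_k(\param_s)-\vd_s$ is the within-cohort subsampling fluctuation. Because the $t$-th column of $\mE$ equals $\sum_{s\le t}\mW_{[t,s]}\noise_{t,s}$ (cf.\ \Cref{eqn:async_r}), this gives $\mE=(\mD-\mD^\star)\mW^\top+\widetilde\mE$ with $\widetilde\mE_{[:,t]}=\sum_{s\le t}\mW_{[t,s]}\bm{\xi}_{t,s}$, and using $\mW^\top\mA^\top\bm{1}=(\mA\mW)^\top\bm{1}=\mM^\top\bm{1}$ one gets $\mE\mA^\top\bm{1}=(\mD-\mD^\star)\mM^\top\bm{1}+\widetilde\mE\mA^\top\bm{1}$. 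For the first summand I would use $(\mM^\top\bm{1})_{[s]}=1-\beta^{\,T-s+1}\in[0,1]$, so that Cauchy--Schwarz gives $\lVert(\mD-\mD^\star)\mM^\top\bm{1}\rVert_2^2\le T\sum_s\lVert\vd_s-\vd^\star_s\rVert_2^2$, and then $\E\lVert\vd_s-\vd^\star_s\rVert_2^2\le G^2$ by Jensen and \Cref{assumption:var}; this summand thus contributes at most $T^2G^2$ in expectation.

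The heart of the argument is the fluctuation term $\widetilde\mE\mA^\top\bm{1}$. Here I would condition on the cohorts $\{\gK_s\}$ and the arrival counts $\{C_{t,s}\}$, and exploit that $\mW$ — hence $\mA$ and $\bm{b}:=\mA^\top\bm{1}$ — depends only on these counts, so it is frozen by the conditioning. Since a client sampled at iteration $s$ cannot be received before iteration $s$, the iterate $\param_s$ is independent of how cohort $s$ is split among its arrival rounds, so by \Cref{assumption:random} each $\bm{\xi}_{t,s}$ is conditionally mean zero; moreover subsamples from distinct cohorts are conditionally independent, and disjoint subsamples from the same cohort are negatively correlated. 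Expanding $\lVert\widetilde\mE\mA^\top\bm{1}\rVert_2^2$ and taking the conditional expectation therefore makes all cross terms $\le 0$, leaving $\sum_t b_t^2\sum_{s\le t}\mW_{[t,s]}^2\,\E\lVert\bm{\xi}_{t,s}\rVert_2^2$. A bias--variance identity with \Cref{assumption:bias} gives $\E\lVert\bm{\xi}_{t,s}\rVert_2^2\le\Var_{k\sim\gK_s}[\Delta_k(\param_s)]/C_{t,s}\le\rho^2/C_{t,s}$, and since $\mW_{[t,s]}\le C_{t,s}/C$ with $\sum_{s\le t}C_{t,s}=C$, the inner sum is at most $\rho^2/C$, while $\lVert\bm{b}\rVert_2^2\le T\lVert\mA\rVert_F^2$; taking the full expectation yields $\E\lVert\widetilde\mE\mA^\top\bm{1}\rVert_2^2\le\tfrac{T\rho^2}{C}\E\lVert\mA\rVert_F^2$. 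Assembling the two pieces with $\lVert x+y\rVert_2^2\le2\lVert x\rVert_2^2+2\lVert y\rVert_2^2$, dividing by $T^2$, and inserting $\eta=\gO(1/\sqrt T)$, $\E\lVert\mA\rVert_F^2=\gO(CT^2)$ and $\rho^2\le G^2$ delivers the stated $\gO(G^2)$ bound (and, up to the constant on $G^2$, the explicit form in \Cref{eq:ma_bound_main}).

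The step I expect to be the real obstacle is the conditional cross-term analysis in the fluctuation bound: $\mA$ is a random function of the arrival pattern encoded in $\mW$, so $\mA$ and the $\bm{\xi}_{t,s}$ are not independent, and one has to pick precisely the $\sigma$-algebra (cohorts plus arrival counts) that simultaneously freezes $\mA$ and keeps the $\bm{\xi}_{t,s}$ centered, then argue that fluctuations from different source iterations contribute no positive cross term despite the recursive dependence of the $\param_s$ on earlier rounds — this is exactly where \Cref{assumption:random} (justified in the appendix) does the work. A naive triangle-inequality bound here costs an extra factor of $T$ and only yields $\gO(TG^2)$, so the correlation structure genuinely has to be used; by contrast the exact-recovery step and the purely algebraic manipulations are routine.
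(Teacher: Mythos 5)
Your proposal is correct and follows essentially the same route as the paper's proof: full rank of the lower-triangular $\mW$ makes every leading block invertible so that $\mA\mW=\mM$ exactly and the implicit momentum bias vanishes, and the remaining term $\eta\,\mE\mA^\top\bm{1}$ is split into a cohort-level sampling error (bounded via Assumption~\ref{assumption:var}) plus a within-cohort arrival fluctuation that is conditionally mean-zero given the cohorts and arrival counts (Assumption~\ref{assumption:random}), which is precisely the content of the paper's Lemmas~\ref{lemma:1} and~\ref{lemma:2}. The only substantive differences are that the paper proves a more general rank-deficient version via the Moore--Penrose pseudoinverse and specializes to full rank, and that your direct bound on $\lVert\mE\mA^\top\bm{1}\rVert_2^2$ (rather than passing through $T\lVert\mE\mA^\top\rVert_F^2$) yields a slightly sharper constant on the $G^2$ term, which still implies the stated bound.
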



We defer the proof and discuss the more general case when $\mW$ is not full rank to Appendix~\ref{app:proofs}.
Under the given condition, AsyncFL with momentum approximation achieves the same error as SyncFL and drops the implicit momentum bias term in~\Cref{eq:async_bound}.
We show that the condition of $\E\lVert\mA\rVert^2_F=\gO(CT^2)$ holds empirically in Appendix~\ref{app:discuss}.

\paragraphbe{Light-weight momentum approximation.}
The full approximation above requires a server-side storage cost of $\gO(Td)$ as all past received gradients $\mR$ needs to be saved to disk. 
This is usually not a concern as disk storage is cheap. In addition, $T$ in FL is typically in the order of thousands and the model size $d$ is small to meet on-device resource constraints~\cite{xu2023federated,xu2023training}.

In the case of both $T$ and $d$ are high and the disk storage cost becomes a concern, we propose a light-weight approximation which has no extra storage cost on the server. 
The light-weight update rule is the same as~\Cref{eq:ma-update} except that $\va_t$ is replaced by $\tilde{\va}_t$ defined recursively as below:
\begin{align}
\tilde{\va}_{t} &= u_t\ve_{t} + v_t \tilde{\va}_{t-1},
\label{eq:light-weight-vector}
\end{align}
where $u_t, v_t\in\R$ are to be optimized. 
With the recursive definition of $\tilde{\va}_t$, we rewrite~\Cref{eq:ma-update} as:
\begin{align}
\param_{t+1} &= \param_t - \eta \mR \tilde{\va}_t 
= \param_t - \eta (u_t\mR\ve_{t} + v_t \mR\tilde{\va}_{t-1}) = \param_t - \eta (u_t\vr_t + v_t \mR\tilde{\va}_{t-1}),
\label{eq:light-weight-rule}
\end{align}
which can be simplified to the following update rule similar to~\Cref{eq:momentum}:
\begin{align}
\tilde{\vm}_t = \mR\tilde{\va}_{t} = u_t\vr_t + v_t \tilde{\vm}_{t-1}, \,
\param_{t+1} &= \param_t - \eta\tilde{\vm}_t.
\label{eq:light-weight-rule-final}
\end{align}
The difference to~\Cref{eq:momentum} is that there are $T$ pairs of real numbers $(u_t, v_t)$ in light-weight MA instead of a single $\beta\in[0, 1]$. 
Since \Cref{eq:light-weight-rule-final} depends on $\tilde{\vm}$ and not on $\mR$, light-weight approximation saves the extra $\gO(Td)$ storage cost and has the same space complexity as the standard momentum updates by maintaining a single buffer $\tilde{\vm}_t$.

To find the best $(u_t, v_t) = (u_\text{opt}, v_\text{opt})$ in iteration $t$, we substitute~(\ref{eq:light-weight-vector}) back into~(\ref{eq:obj}):
\begin{align}
\min_{u, v\in\R} \lVert (u\ve_{t} + v\tilde{\va}_{t-1})^\top\mW - \mM_{[t,:]}\rVert_2^2. 
\label{eq:obj_light-weight}
\end{align}

\paragraphbe{Differentially private momentum approximation.}
Both the model updates $\Delta_k$ and the model version one-hot encoding $\ve_{t-\tau(k)}$ are sensitive information as they reveal the client's local data and their timing of participating FL.
We can use DP mechanisms to protect both information. 

Let $\gamma$ be a scaling factor and $\vy_k = \Delta_k \oplus \gamma\ve_{t-\tau(k)}\in\R^{d+T}$ be the payload that client $k$ intends to send to the server, where $\oplus$ denotes vector concatenation. 
We constrain the $\normltwo$ sensitivity of $\vy_k$ as $\bar{\vy}_k = \mathrm{clip}(\Delta_k, S_\Delta) \oplus \gamma\ve_{t-\tau(k)}$,
such that $\lVert\bar{\vy}_k\rVert_2 \leq \sqrt{S_\Delta^2 + \gamma^2} = S$.
Applying Gaussian Mechanism as $\sum_k\bar{\vy}_k + \mathcal{N}(0, \sigma^2 S^2\mI)$ satisfies $(\epsilon, \delta)$-DP as described in Section~\ref{sec:dp}.
By choosing  $\gamma=\frac{\sigma}{\sqrt{\xi^2 - \sigma^2}} S_\Delta$,
the Gaussian noise added to the un-scaled $\sum_k \ve_{t-\tau(k)}$ is $\mathcal{N}(0, (\frac{\sigma}{\gamma} S)^2\mI))$ with standard deviation:
\begin{align}
\frac{\sigma}{\gamma}S &= \frac{\sigma}{\gamma} \sqrt{S_\Delta^2+\gamma^2} = \sqrt{\frac{\sigma^2(\xi^2-\sigma^2)}{\sigma^2} + \sigma^2} = \xi.
\label{eq:xi}
\end{align}
In practice, we tune $\xi > \sigma$ to balance the utility on $\sum_k \Delta_k$ and $\sum_k \ve_{t-\tau(k)}$. 
As momentum approximation is a post-processing~\cite{dwork2014algorithmic} on the private aggregates:
\begin{align}
\vr_t = \frac{1}{C}[\sum_{k\in\gC_t}\mathrm{clip}(\Delta_k, S_\Delta) +\mathcal{N}(0, \sigma^2 S^2 \mI)],\, \mW_{[t,:]} = \frac{1}{C\gamma}[\sum_{k\in\gC_t}\gamma\ve^\top_{t-\tau(k)}+\mathcal{N}(0, \sigma^2S^2 \mI)],
\end{align} 
the MA update rules in~\Cref{eq:ma-update,eq:light-weight-rule-final} also satisfies the same $(\epsilon, \delta)$-DP guarantee.

\paragraphbe{Implicit momentum bias in the preconditioner.}
For adaptive optimizers such as Adam~\cite{kingma2014adam} and RMSProp, the preconditioner $\mH_t$ is the square root of exponentially decaying average of the gradients' second moments:
$
\mathrm{diag}(\mH_t)^2 \gets \beta^\prime\mathrm{diag}(\mH_t)^2 + (1-\beta^\prime) \vr_{t}^2.
$
The stale updates in $\vr_t$ bias the estimation of second moments similar to the implicit momentum bias term in the first moments. 
Nonetheless, preconditioner is known to be robust to delayed gradients~\cite{gupta2018shampoo,li2022differentially}. 
In Figure~\ref{fig:implicit-momentum-results}, we show that $\beta^\prime$ impacts the performance in AsyncFL similarly to in SyncFL.
We leave the study of implicit bias from staleness in the second moments to future work.

\begin{figure}[t]
\centering
\includegraphics[width=0.475\linewidth]{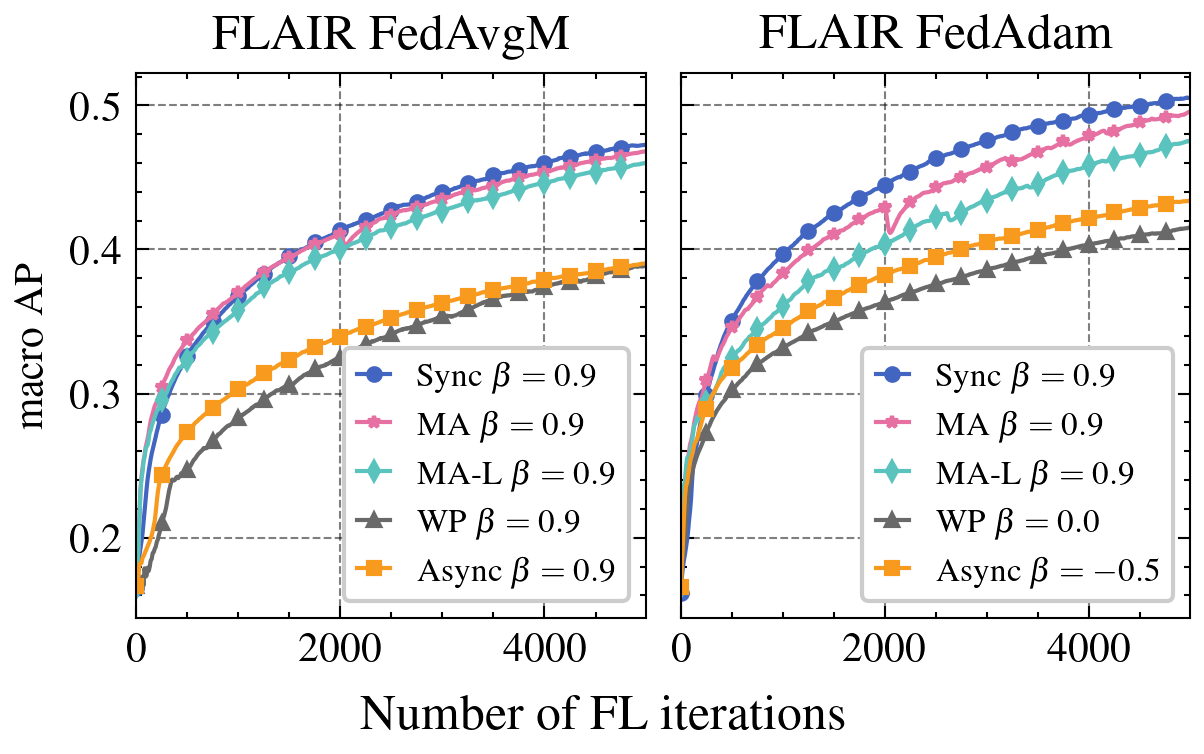}
\includegraphics[width=0.475\linewidth]{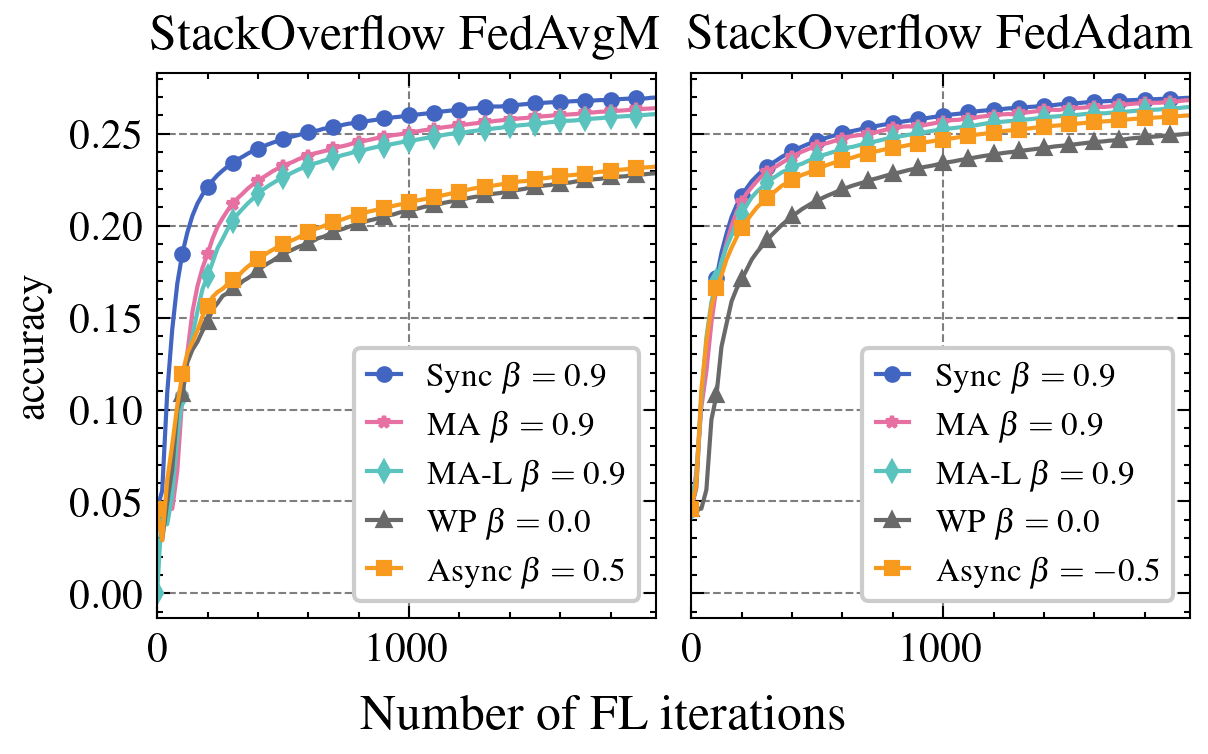}
\footnotesize
\caption{Comparison between MA, light-weight MA (MA-L) and baseline approaches.}
\label{fig:results}
\end{figure}

\begin{figure}[t]
\centering
\includegraphics[width=0.475\linewidth]{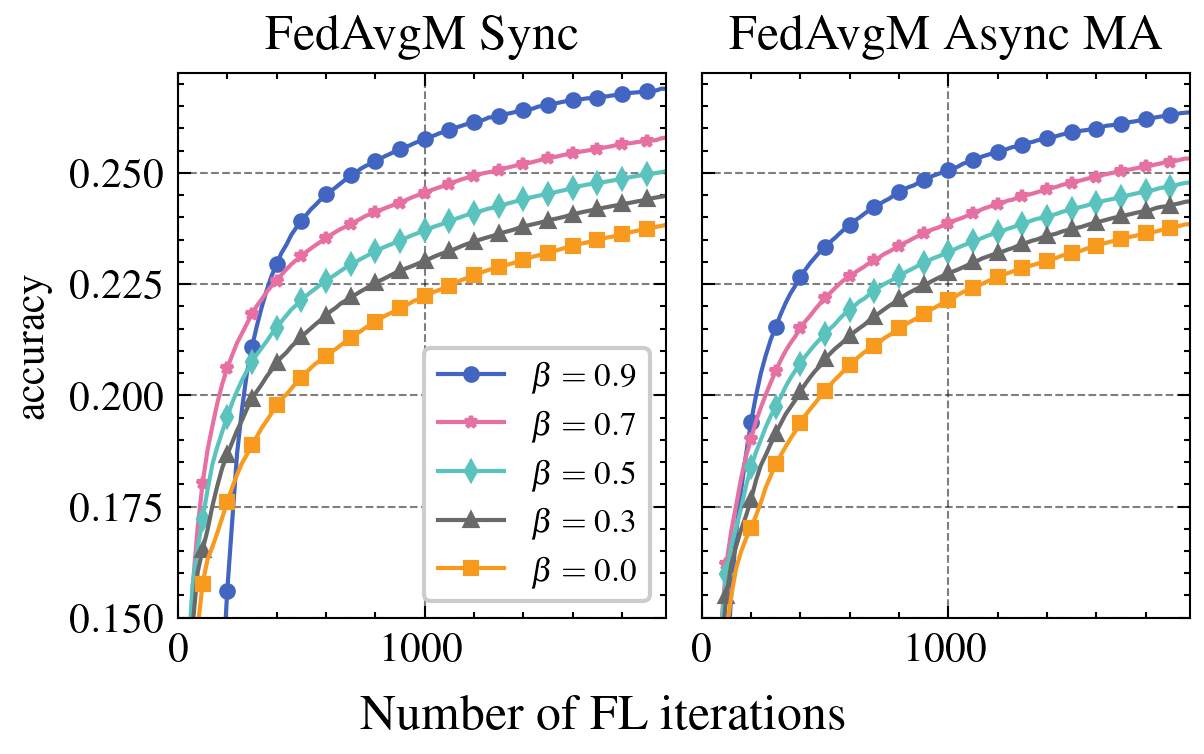}
\includegraphics[width=0.475\linewidth]{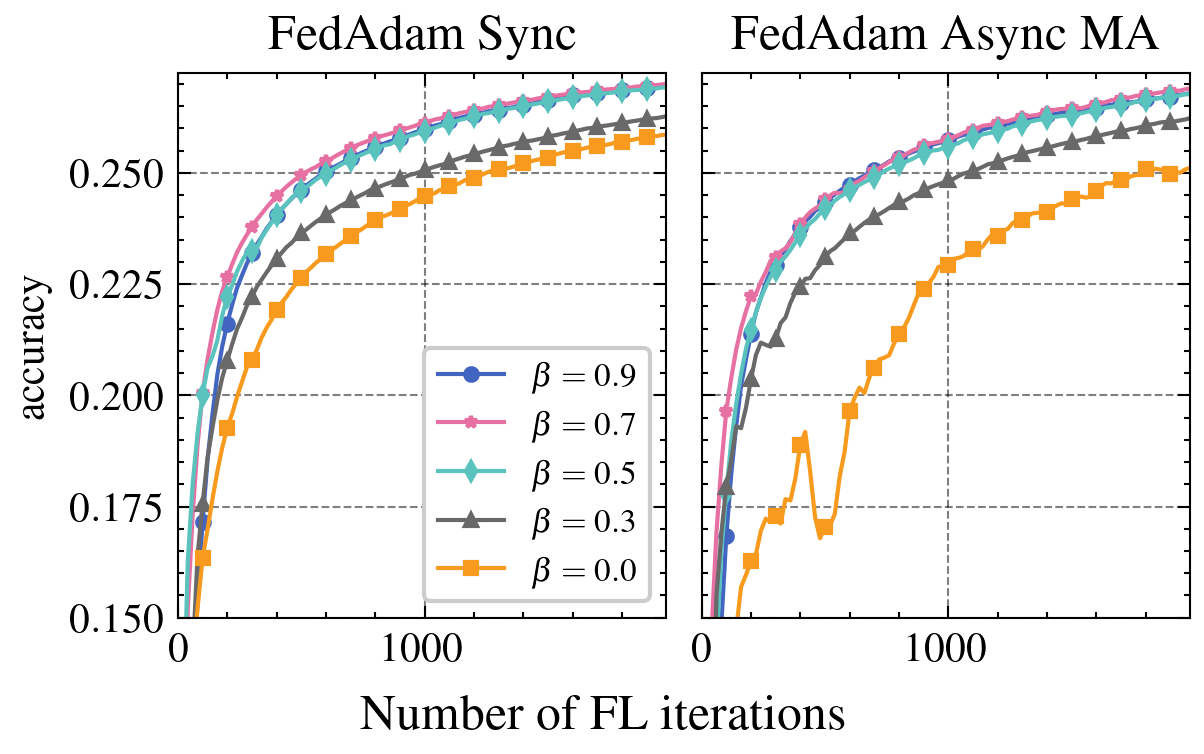}
\footnotesize
\caption{Impact of $\beta$ on SyncFL and AsyncFL with MA on the StackOverflow dataset.}
\label{fig:beta}
\end{figure}

\begin{figure}[t]
\centering
\includegraphics[width=0.475\linewidth]{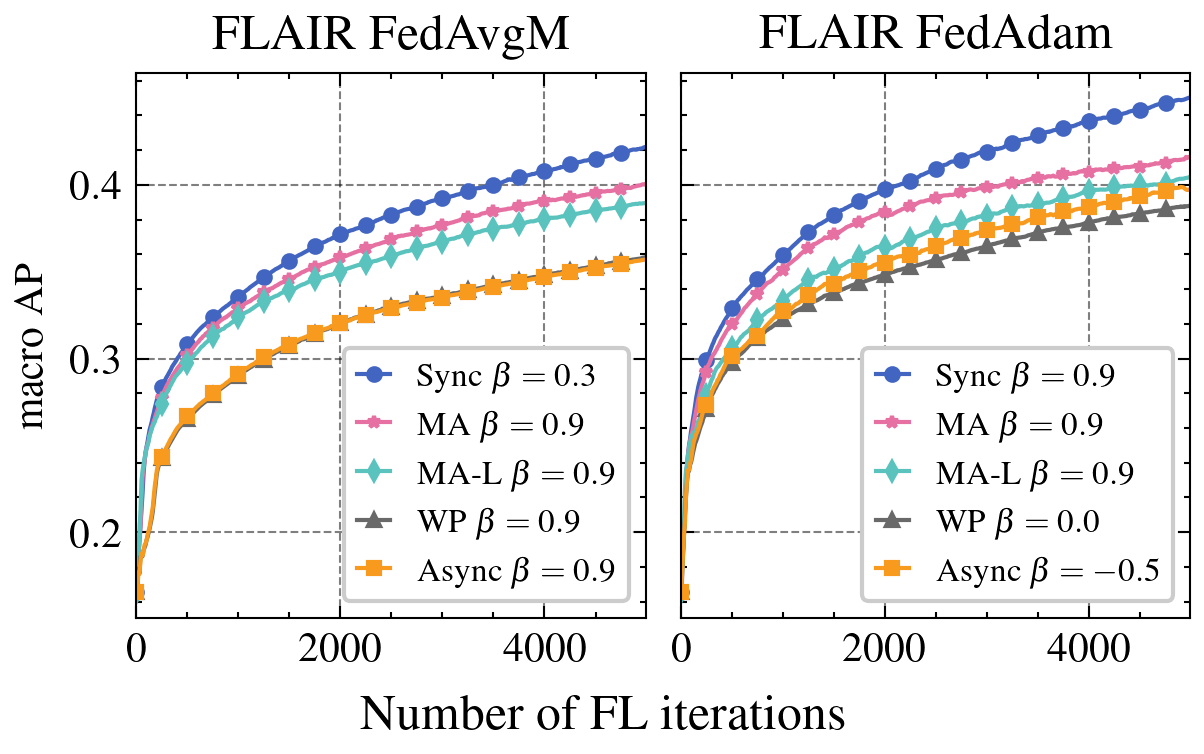}
\includegraphics[width=0.475\linewidth]{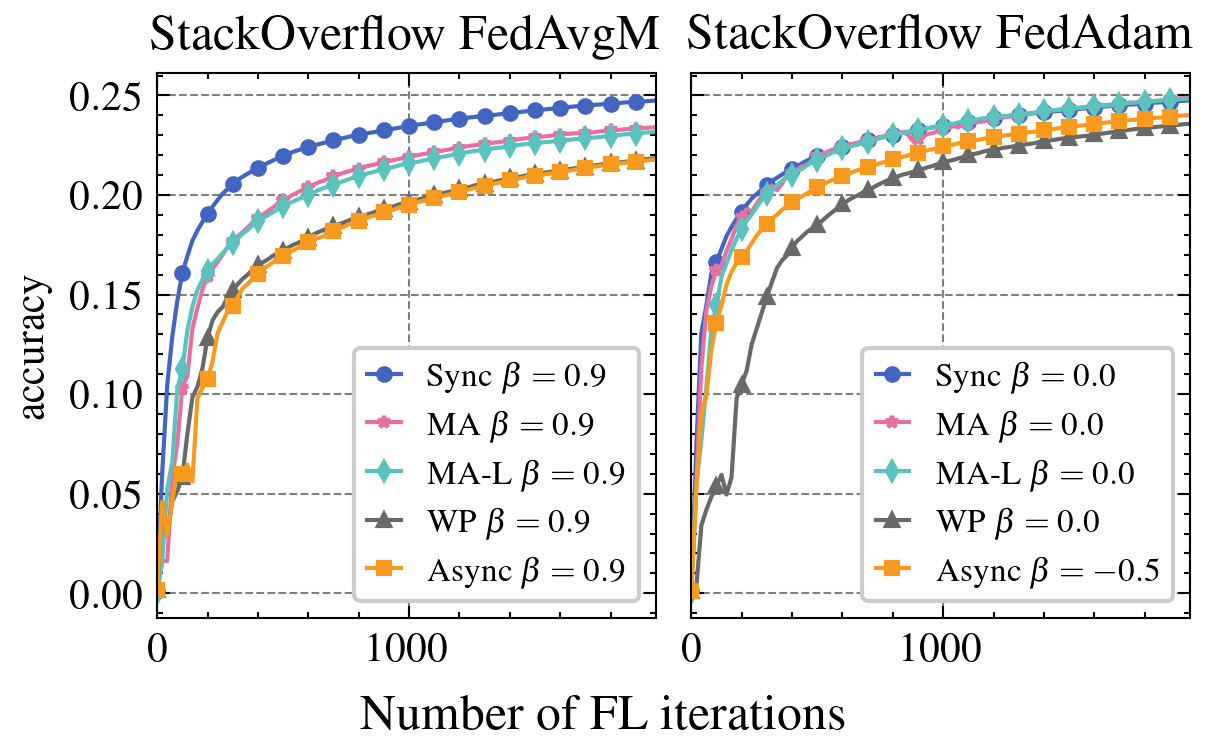}
\footnotesize
\caption{Comparison between MA, light-weight MA (MA-L) and baseline approaches with DP.}
\label{fig:dp-results}
\end{figure}

\section{Experiments}

In this section, we describe the empirical evaluation of momentum approximation (MA) with FedBuff.
We denote the light-weight MA in~\Cref{eq:obj_light-weight} as MA-light or MA-L.
We focus on two server-side momentum-based optimizers: FedAvgM~\cite{hsu2019measuring} and FedAdam~\cite{reddi2020adaptive}. 

\paragraphbe{Datasets and ML Tasks.} We conduct experiments on FLAIR~\cite{song2022flair}, a large-scale annotated image dataset for multi-label classification, and StackOverflow~\cite{stackoverflow}, a commonly used language modeling FL benchmark dataset. 
Both datasets have natural client partition which captures the non-IID characteristics in real world FL setting, and we believe they better represent the production FL datasets compared to other commonly used datasets (e.g. CIFAR10)
with artificially simulated client partition from a given distribution (e.g. Dirichlet~\cite{hsu2019measuring}).

For the FLAIR dataset, the task is to predict the set of coarse-grained labeled objects in a given image.
We use macro averaged precision (macro AP) as the evaluation metric.
For the StackOverflow dataset, the task is next word prediction and we use top prediction accuracy as the evaluation metric following prior work~\cite{reddi2020adaptive}. 
The details of hyerparameter choices are described in Appendix~\ref{sec:hyperparams}

\subsection{Baselines}

\paragraphbe{AsyncFL with tuned momentum parameter.}
We consider FedBuff as the baseline AsyncFL approach.
As suggested in~\cite{mitliagkas2016asynchrony,nguyen2022federated}, $\beta$ needs to be tuned carefully in AsyncFL and sometimes negative $\beta$ performs better. 
We tune $\beta$ from the range $(-1, 1)$.
 
\paragraphbe{Weight prediction (WP).}
WP is proposed to speed up AsyncSGD~\cite{kosson2021pipelined} and in particular, to address the implicit momentum issue~\cite{hakimi2019taming} in traditional distributed training setting. 
We modify WP to be compatible with AsyncFL as detailed in Appendix~\ref{sec:wp}.





\subsection{Results}
Figure~\ref{fig:results} summarizes the convergence comparison between our proposed  MA and the baseline approaches.
For both FedAvgM and FedAdam on both datasets, MA and MA-light significantly outperforms the baseline approaches of best tuned $\beta$.
We do not find WP worked well in the FL setting and acknowledge that more thoughtful integration is required to adopt techniques from AsyncSGD literature to AsyncFL, which is beyond the scope of this work.

\paragraphbe{Impact of $\beta$.}
Figure~\ref{fig:beta} illustrates how $\beta$ impacts SyncFL and AsyncFL with MA.
The correlation pattern between $\beta$ and the performance remains the same between SyncFL and AsyncFL with MA, i.e., larger $\beta$ leads to better performance in this task.
This demonstrates that the AsyncFL with MA can reuse the tuned $\beta$ in SyncFL experiments instead of searching in a wider range from scratch, which saves the costs from expensive hyperparameter tuning in FL.  

\paragraphbe{Impact of cohort size.} 
Smaller cohort can negatively impact the convergence of MA as the variance of $\vr_t$ in \Cref{eqn:async_r} increases.
We evaluate the impact of cohort size $C$ on the StackOverflow dataset by varying $C$ from 50 to 400. 
We compare the performance between SyncFL and FedBuff with MA on the same $C$.
Left of Table~\ref{tab:cohort_size} shows the impact of $C$ on MA. 
As $C$ increases from 50 to 400, the gap between AsyncFL with MA and SyncFL becomes smaller, which validates our hypothesis that smaller cohort size has more negative impacts on MA than SyncFL. 

\begin{table}[t]
\centering
\footnotesize
\caption{(Left) relative accuracy gap (\%) between SyncFL and AsyncFL with  MA for different cohort sizes $C$ on the StackOverflow dataset.
(Right) Relative speed up ($\times$) of MA compared to FedBuff baseline. FLR denotes FLAIR and SO denotes StackOverflow.}

\begin{tabular}{l|rr|rr}
\toprule
& \multicolumn{2}{c|}{FedAvgM} &  \multicolumn{2}{c}{FedAam} \\
$C$ & MA & MA-light & MA & MA-light \\
\midrule
50  & 5.17  &  7.68 & 7.26  &  11.94\\
100  & 3.17  &  4.25 & 2.47  &  4.51 \\
200  & 2.18  &  3.44 & 0.52  &  1.87 \\
400  & 1.39  &  2.8 & 0.46  &  0.4\\
\bottomrule
\end{tabular}
\quad
\begin{tabular}{l|cc|cc}
\toprule
& \multicolumn{2}{c|}{FedAvgM} &  \multicolumn{2}{c}{FedAam} \\
Setup & MA & MA-light & MA & MA-light \\
\midrule 
FLR & 3.56 & 3.01 & 2.20 & 1.66\\
FLR w. DP & 2.57 & 2.09 & 1.61 & 1.15 \\
SO & 3.96 & 3.30 & 1.62 & 1.30 \\
SO w. DP & 2.06 & 1.80 & 1.50 & 1.55 \\
\bottomrule
\end{tabular}
\label{tab:cohort_size}
\end{table}


\paragraphbe{DP results.}
Figure~\ref{fig:dp-results} illustrates the convergence results with DP. 
The pattern of the performance comparison is similar to that of the non-private case where both MA and MA-light outperform the AsyncFL baselines.
We notice that in FedAdam baseline, negative $\beta$ values are optimal on both datasets, indicating that the baseline requires more hyper-parameter tuning in a wider range of $\beta$. 


\paragraphbe{Speed up of MA.}
We finally evaluate the speed up of MA.
Following~\cite{nguyen2022federated}, we record the iterations that MA needed to achieve the best metric from FedBuff baseline, and have this number divided by the iterations the baseline took to get the relative speed up.
Right of Table~\ref{tab:cohort_size} summarizes the results. 
On both FLAIR and StackOverflow, MA speeds up FedAvgM more than FedAdam, and we suspect the reason is that the preconditioner in the FedAdam baseline is less affected, thereby mitigating the impact of staleness. Thus the improvement from MA is less significant. 
DP also impacts the speed up negatively as the estimation of $\mW$ becomes noisy and the noise scale on $\vr_t$ increases. 
\section{Related Work}
\paragraphbe{Asynchronous distributed SGD.}
The negative impact of gradient staleness has been studied in the traditional AsyncSGD setting.
A line of research focused on reducing the impact of staleness or the discrepancy between worker's model and the central model. 
\cite{zhang2016staleness} first proposed to down-scale the stale gradients based on their staleness $\tau$. 
\cite{barkai2019gap} argued that $\tau$ failed to accurately reflect the discrepancy and proposed to schedule down-scale based on the similarity between worker's model and the central model. 
\cite{zheng2017asynchronous} used a Taylor expansion and Hessian approximation to compensate for the staleness. 
\cite{hakimi2019taming,kosson2021pipelined} proposed parameter prediction of the future central model to reduce discrepancy, simply by following the optimization oracle for more iterations. 
The impact of staleness is exacerbated by momentum, as analyzed in~\cite{mitliagkas2016asynchrony}, where a small or even negative momentum parameter is preferred to adverse effects of asynchrony, which motivated an adaptive momentum training schedule~\cite{zhang2017yellowfin}.


\paragraphbe{Asynchronous federated learning.}
Existing AsyncFL works draw inspirations from AsyncSGD to handle stragglers and heterogeneous latency. 
\cite{xie2019asynchronous,park2021sageflow,nguyen2022federated} 
down-scaled the local model updates based on staleness $\tau$ before the central aggregation.
Nonetheless, as we showed, stale updates after down-scaling still affect training performance of momentum in practice.

On the other hand, FL differs from the traditional distributed SGD setting by having massive number of clients, higher communication overhead and more complicated system for secure aggregation and DP. 
Hence, many AsyncSGD algorithms do not directly fit in AsyncFL.
Gradient compensation~\cite{zheng2017asynchronous} applied complicated operations on individual client updates, which is less obvious how to implement with secure aggregation.
\cite{hakimi2019taming,chen2020asynchronous,kosson2021pipelined} required broadcasting model parameters and momentum, leading to doubled communication cost. 
Further research is needed to study how to integrate the other promising AsyncSGD algorithms efficiently in AsyncFL. 


Apart from aforementioned AsyncFL algorithms on weight aggregation, there are many orthogonal tactics on improving AsyncFL. 
For example,  
gradient compression techniques \cite{li2020efficient,lu2020privacy} improved communication efficiency; and model splitting \cite{chen2019communication,wang2021efficient,dun2023efficient} had each client responsible for training a certain part of the whole model.


\paragraphbe{Momentum-based federated optimizers.}
\cite{hsu2019measuring} first proposed to extend FedAvg with server-side momentum (FedAvgM) to accelerate convergence. 
\cite{reddi2020adaptive} improved the server-side optimization further by using adaptive optimizers with momentum.
\cite{khanduri2021stem,karimireddy2020mime,xu2021fedcm,ozfatura2021fedadc} proposed to perform momentum updates locally on the clients to alleviate local drift problem. 
\cite{sun2023role} introduced multistage FedGM which interpolates between FedAvg and FedAvgM with a hyperparameter scheduler, and provides a general momentum computation for FL to better control the momentum acceleration.
Our approach is compatible to any momentum-based optimizers and orthogonal to these works as they were not proposed to resolve the server-side implicit momentum bias in AsyncFL.
\section{Conclusion}
We demonstrate how stale model updates incur an implicit bias in AsyncFL, which diminishes the acceleration from momentum-based optimizers.
To address this issue, we propose momentum approximation which optimizes a least square problem online to find the optimal weighted average of historical model updates that approximates the desired momentum updates.
Momentum approximation is easy to integrate in production FL systems with a minor storage and communication cost.
We empirically evaluate momentum approximation in both non-private and private settings on real-world benchmark FL datasets, and demonstrated that it outperforms the existing AsyncFL algorithms.

\bibliography{reference}

\begin{thebibliography}{10}

\bibitem{stackoverflow}
The Tensorflow~Federated Authors.
\newblock Tensorflow federated stack overflow dataset.
\newblock
  \url{https://www.tensorflow.org/federated/api_docs/python/tff/simulation/datasets/stackoverflow/load_data},
  2019.

\bibitem{barkai2019gap}
Saar Barkai, Ido Hakimi, and Assaf Schuster.
\newblock Gap aware mitigation of gradient staleness.
\newblock {\em arXiv preprint arXiv:1909.10802}, 2019.

\bibitem{bonawitz2019towards}
Keith Bonawitz, Hubert Eichner, Wolfgang Grieskamp, Dzmitry Huba, Alex
  Ingerman, Vladimir Ivanov, Chloe Kiddon, Jakub Kone{\v{c}}n{\`y}, Stefano
  Mazzocchi, Brendan McMahan, et~al.
\newblock Towards federated learning at scale: System design.
\newblock {\em Proceedings of machine learning and systems}, 1:374--388, 2019.

\bibitem{bonawitz2017practical}
Keith Bonawitz, Vladimir Ivanov, Ben Kreuter, Antonio Marcedone, H~Brendan
  McMahan, Sarvar Patel, Daniel Ramage, Aaron Segal, and Karn Seth.
\newblock Practical secure aggregation for privacy-preserving machine learning.
\newblock In {\em proceedings of the 2017 ACM SIGSAC Conference on Computer and
  Communications Security}, pages 1175--1191, 2017.

\bibitem{cai2022enable}
Han Cai, Ji~Lin, Yujun Lin, Zhijian Liu, Haotian Tang, Hanrui Wang, Ligeng Zhu,
  and Song Han.
\newblock Enable deep learning on mobile devices: Methods, systems, and
  applications.
\newblock {\em ACM Transactions on Design Automation of Electronic Systems
  (TODAES)}, 27(3):1--50, 2022.

\bibitem{chai2021fedat}
Zheng Chai, Yujing Chen, Ali Anwar, Liang Zhao, Yue Cheng, and Huzefa Rangwala.
\newblock Fedat: A high-performance and communication-efficient federated
  learning system with asynchronous tiers.
\newblock In {\em Proceedings of the International Conference for High
  Performance Computing, Networking, Storage and Analysis}, pages 1--16, 2021.

\bibitem{chen2019communication}
Yang Chen, Xiaoyan Sun, and Yaochu Jin.
\newblock Communication-efficient federated deep learning with layerwise
  asynchronous model update and temporally weighted aggregation.
\newblock {\em IEEE transactions on neural networks and learning systems},
  31(10):4229--4238, 2019.

\bibitem{chen2020asynchronous}
Yujing Chen, Yue Ning, Martin Slawski, and Huzefa Rangwala.
\newblock Asynchronous online federated learning for edge devices with non-iid
  data.
\newblock In {\em 2020 IEEE International Conference on Big Data (Big Data)},
  pages 15--24. IEEE, 2020.

\bibitem{choquette2023amplified}
Christopher~A Choquette-Choo, Arun Ganesh, Ryan McKenna, H~Brendan McMahan,
  Keith Rush, Abhradeep~Guha Thakurta, and Zheng Xu.
\newblock (amplified) banded matrix factorization: A unified approach to
  private training.
\newblock {\em Advances in Neural Information Processing Systems}, 2023.

\bibitem{de2022unlocking}
Soham De, Leonard Berrada, Jamie Hayes, Samuel~L Smith, and Borja Balle.
\newblock Unlocking high-accuracy differentially private image classification
  through scale.
\newblock {\em arXiv preprint arXiv:2204.13650}, 2022.

\bibitem{denisov2022improved}
Sergey Denisov, H~Brendan McMahan, John Rush, Adam Smith, and Abhradeep
  Guha~Thakurta.
\newblock Improved differential privacy for sgd via optimal private linear
  operators on adaptive streams.
\newblock {\em Advances in Neural Information Processing Systems},
  35:5910--5924, 2022.

\bibitem{duchi2011adaptive}
John Duchi, Elad Hazan, and Yoram Singer.
\newblock Adaptive subgradient methods for online learning and stochastic
  optimization.
\newblock {\em Journal of machine learning research}, 12(7), 2011.

\bibitem{dun2023efficient}
Chen Dun, Mirian Hipolito, Chris Jermaine, Dimitrios Dimitriadis, and
  Anastasios Kyrillidis.
\newblock Efficient and light-weight federated learning via asynchronous
  distributed dropout.
\newblock In {\em International Conference on Artificial Intelligence and
  Statistics}, pages 6630--6660. PMLR, 2023.

\bibitem{dutta2021slow}
Sanghamitra Dutta, Jianyu Wang, and Gauri Joshi.
\newblock Slow and stale gradients can win the race.
\newblock {\em IEEE Journal on Selected Areas in Information Theory},
  2(3):1012--1024, 2021.

\bibitem{dwork2006our}
Cynthia Dwork, Krishnaram Kenthapadi, Frank McSherry, Ilya Mironov, and Moni
  Naor.
\newblock Our data, ourselves: Privacy via distributed noise generation.
\newblock In {\em Advances in Cryptology-EUROCRYPT 2006: 24th Annual
  International Conference on the Theory and Applications of Cryptographic
  Techniques, St. Petersburg, Russia, May 28-June 1, 2006. Proceedings 25},
  pages 486--503. Springer, 2006.

\bibitem{dwork2006calibrating}
Cynthia Dwork, Frank McSherry, Kobbi Nissim, and Adam Smith.
\newblock Calibrating noise to sensitivity in private data analysis.
\newblock In {\em Theory of Cryptography: Third Theory of Cryptography
  Conference, TCC 2006, New York, NY, USA, March 4-7, 2006. Proceedings 3},
  pages 265--284. Springer, 2006.

\bibitem{dwork2014algorithmic}
Cynthia Dwork and Aaron Roth.
\newblock The algorithmic foundations of differential privacy.
\newblock {\em Foundations and Trends{\textregistered} in Theoretical Computer
  Science}, 9(3--4):211--407, 2014.

\bibitem{gupta2018shampoo}
Vineet Gupta, Tomer Koren, and Yoram Singer.
\newblock Shampoo: Preconditioned stochastic tensor optimization.
\newblock In {\em International Conference on Machine Learning}, pages
  1842--1850. PMLR, 2018.

\bibitem{hakimi2019taming}
Ido Hakimi, Saar Barkai, Moshe Gabel, and Assaf Schuster.
\newblock Taming momentum in a distributed asynchronous environment.
\newblock {\em arXiv preprint arXiv:1907.11612}, 2019.

\bibitem{he2016deep}
Kaiming He, Xiangyu Zhang, Shaoqing Ren, and Jian Sun.
\newblock Deep residual learning for image recognition.
\newblock In {\em Proceedings of the IEEE conference on computer vision and
  pattern recognition}, pages 770--778, 2016.

\bibitem{hsu2019measuring}
Tzu-Ming~Harry Hsu, Hang Qi, and Matthew Brown.
\newblock Measuring the effects of non-identical data distribution for
  federated visual classification.
\newblock {\em arXiv preprint arXiv:1909.06335}, 2019.

\bibitem{huba2022papaya}
Dzmitry Huba, John Nguyen, Kshitiz Malik, Ruiyu Zhu, Mike Rabbat, Ashkan
  Yousefpour, Carole-Jean Wu, Hongyuan Zhan, Pavel Ustinov, Harish Srinivas,
  et~al.
\newblock Papaya: Practical, private, and scalable federated learning.
\newblock {\em Proceedings of Machine Learning and Systems}, 4:814--832, 2022.

\bibitem{kairouz2021practical}
Peter Kairouz, Brendan McMahan, Shuang Song, Om~Thakkar, Abhradeep Thakurta,
  and Zheng Xu.
\newblock Practical and private (deep) learning without sampling or shuffling.
\newblock In {\em International Conference on Machine Learning}, pages
  5213--5225. PMLR, 2021.

\bibitem{karimireddy2020mime}
Sai~Praneeth Karimireddy, Martin Jaggi, Satyen Kale, Mehryar Mohri, Sashank~J
  Reddi, Sebastian~U Stich, and Ananda~Theertha Suresh.
\newblock Mime: Mimicking centralized stochastic algorithms in federated
  learning.
\newblock {\em arXiv preprint arXiv:2008.03606}, 2020.

\bibitem{khanduri2021stem}
Prashant Khanduri, Pranay Sharma, Haibo Yang, Mingyi Hong, Jia Liu, Ketan
  Rajawat, and Pramod Varshney.
\newblock Stem: A stochastic two-sided momentum algorithm achieving
  near-optimal sample and communication complexities for federated learning.
\newblock {\em Advances in Neural Information Processing Systems},
  34:6050--6061, 2021.

\bibitem{kingma2014adam}
Diederik~P. Kingma and Jimmy Ba.
\newblock Adam: {A} method for stochastic optimization.
\newblock In {\em International Conference on Learning Representations}, 2015.

\bibitem{kosson2021pipelined}
Atli Kosson, Vitaliy Chiley, Abhinav Venigalla, Joel Hestness, and Urs Koster.
\newblock Pipelined backpropagation at scale: training large models without
  batches.
\newblock {\em Proceedings of Machine Learning and Systems}, 3:479--501, 2021.

\bibitem{li2020efficient}
Ming Li, Yiwei Chen, Yiqin Wang, and Yu~Pan.
\newblock Efficient asynchronous vertical federated learning via gradient
  prediction and double-end sparse compression.
\newblock In {\em 2020 16th international conference on control, automation,
  robotics and vision (ICARCV)}, pages 291--296. IEEE, 2020.

\bibitem{li2020federated}
Tian Li, Anit~Kumar Sahu, Manzil Zaheer, Maziar Sanjabi, Ameet Talwalkar, and
  Virginia Smith.
\newblock Federated optimization in heterogeneous networks.
\newblock {\em Proceedings of Machine learning and systems}, 2:429--450, 2020.

\bibitem{li2022differentially}
Tian Li, Manzil Zaheer, Ken Liu, Sashank~J Reddi, Hugh~Brendan McMahan, and
  Virginia Smith.
\newblock Differentially private adaptive optimization with delayed
  preconditioners.
\newblock In {\em International Conference on Learning Representations}, 2023.

\bibitem{lin2020mcunet}
Ji~Lin, Wei-Ming Chen, Yujun Lin, Chuang Gan, Song Han, et~al.
\newblock Mcunet: Tiny deep learning on iot devices.
\newblock {\em Advances in Neural Information Processing Systems},
  33:11711--11722, 2020.

\bibitem{lin2022device}
Ji~Lin, Ligeng Zhu, Wei-Ming Chen, Wei-Chen Wang, Chuang Gan, and Song Han.
\newblock On-device training under 256kb memory.
\newblock {\em Advances in Neural Information Processing Systems},
  35:22941--22954, 2022.

\bibitem{lu2020privacy}
Xiaofeng Lu, Yuying Liao, Pietro Lio, and Pan Hui.
\newblock Privacy-preserving asynchronous federated learning mechanism for edge
  network computing.
\newblock {\em IEEE Access}, 8:48970--48981, 2020.

\bibitem{mcmahan2017communication}
Brendan McMahan, Eider Moore, Daniel Ramage, Seth Hampson, and Blaise~Aguera
  y~Arcas.
\newblock Communication-efficient learning of deep networks from decentralized
  data.
\newblock In {\em Artificial intelligence and statistics}, pages 1273--1282.
  PMLR, 2017.

\bibitem{mcmahan2018learning}
H~Brendan McMahan, Daniel Ramage, Kunal Talwar, and Li~Zhang.
\newblock Learning differentially private recurrent language models.
\newblock In {\em International Conference on Learning Representations}, 2018.

\bibitem{melis2019exploiting}
Luca Melis, Congzheng Song, Emiliano De~Cristofaro, and Vitaly Shmatikov.
\newblock Exploiting unintended feature leakage in collaborative learning.
\newblock In {\em 2019 IEEE symposium on security and privacy (SP)}, pages
  691--706. IEEE, 2019.

\bibitem{mironov2017renyi}
Ilya Mironov.
\newblock R{\'e}nyi differential privacy.
\newblock In {\em 2017 IEEE 30th computer security foundations symposium
  (CSF)}, pages 263--275. IEEE, 2017.

\bibitem{mironov2019r}
Ilya Mironov, Kunal Talwar, and Li~Zhang.
\newblock R$\backslash$'enyi differential privacy of the sampled gaussian
  mechanism.
\newblock {\em arXiv preprint arXiv:1908.10530}, 2019.

\bibitem{mitliagkas2016asynchrony}
Ioannis Mitliagkas, Ce~Zhang, Stefan Hadjis, and Christopher R{\'e}.
\newblock Asynchrony begets momentum, with an application to deep learning.
\newblock In {\em 2016 54th Annual Allerton Conference on Communication,
  Control, and Computing (Allerton)}, pages 997--1004. IEEE, 2016.

\bibitem{nasr2019comprehensive}
Milad Nasr, Reza Shokri, and Amir Houmansadr.
\newblock Comprehensive privacy analysis of deep learning: Passive and active
  white-box inference attacks against centralized and federated learning.
\newblock In {\em 2019 IEEE symposium on security and privacy (SP)}, pages
  739--753. IEEE, 2019.

\bibitem{nguyen2022federated}
John Nguyen, Kshitiz Malik, Hongyuan Zhan, Ashkan Yousefpour, Mike Rabbat, Mani
  Malek, and Dzmitry Huba.
\newblock Federated learning with buffered asynchronous aggregation.
\newblock In {\em International Conference on Artificial Intelligence and
  Statistics}, pages 3581--3607. PMLR, 2022.

\bibitem{ozfatura2021fedadc}
Emre Ozfatura, Kerem Ozfatura, and Deniz G{\"u}nd{\"u}z.
\newblock Fedadc: Accelerated federated learning with drift control.
\newblock In {\em 2021 IEEE International Symposium on Information Theory
  (ISIT)}, pages 467--472. IEEE, 2021.

\bibitem{park2021sageflow}
Jungwuk Park, Dong-Jun Han, Minseok Choi, and Jaekyun Moon.
\newblock Sageflow: Robust federated learning against both stragglers and
  adversaries.
\newblock {\em Advances in neural information processing systems}, 34:840--851,
  2021.

\bibitem{paulik2021federated}
Matthias Paulik, Matt Seigel, Henry Mason, Dominic Telaar, Joris Kluivers,
  Rogier van Dalen, Chi~Wai Lau, Luke Carlson, Filip Granqvist, Chris
  Vandevelde, et~al.
\newblock Federated evaluation and tuning for on-device personalization: System
  design \& applications.
\newblock {\em arXiv preprint arXiv:2102.08503}, 2021.

\bibitem{reddi2020adaptive}
Sashank~J Reddi, Zachary Charles, Manzil Zaheer, Zachary Garrett, Keith Rush,
  Jakub Kone{\v{c}}n{\`y}, Sanjiv Kumar, and Hugh~Brendan McMahan.
\newblock Adaptive federated optimization.
\newblock In {\em International Conference on Learning Representations}, 2020.

\bibitem{song2022flair}
Congzheng Song, Filip Granqvist, and Kunal Talwar.
\newblock Flair: Federated learning annotated image repository.
\newblock {\em Advances in Neural Information Processing Systems},
  35:37792--37805, 2022.

\bibitem{sun2023role}
Jianhui Sun, Xidong Wu, Heng Huang, and Aidong Zhang.
\newblock On the role of server momentum in federated learning.
\newblock {\em arXiv preprint arXiv:2312.12670}, 2023.

\bibitem{talwar2023samplable}
Kunal Talwar, Shan Wang, Audra McMillan, Vojta Jina, Vitaly Feldman, Bailey
  Basile, Aine Cahill, Yi~Sheng Chan, Mike Chatzidakis, Junye Chen, et~al.
\newblock Samplable anonymous aggregation for private federated data analysis.
\newblock {\em arXiv preprint arXiv:2307.15017}, 2023.

\bibitem{van2020asynchronous}
Marten van Dijk, Nhuong~V Nguyen, Toan~N Nguyen, Lam~M Nguyen, Quoc Tran-Dinh,
  and Phuong~Ha Nguyen.
\newblock Asynchronous federated learning with reduced number of rounds and
  with differential privacy from less aggregated gaussian noise.
\newblock {\em arXiv preprint arXiv:2007.09208}, 2020.

\bibitem{vaswani2017attention}
Ashish Vaswani, Noam Shazeer, Niki Parmar, Jakob Uszkoreit, Llion Jones,
  Aidan~N Gomez, {\L}ukasz Kaiser, and Illia Polosukhin.
\newblock Attention is all you need.
\newblock {\em Advances in neural information processing systems}, 30, 2017.

\bibitem{wang2023flint}
Ewen Wang, Boyi Chen, Mosharaf Chowdhury, Ajay Kannan, and Franco Liang.
\newblock Flint: A platform for federated learning integration.
\newblock {\em Proceedings of Machine Learning and Systems}, 5, 2023.

\bibitem{wang2021field}
Jianyu Wang, Zachary Charles, Zheng Xu, Gauri Joshi, H~Brendan McMahan, Maruan
  Al-Shedivat, Galen Andrew, Salman Avestimehr, Katharine Daly, Deepesh Data,
  et~al.
\newblock A field guide to federated optimization.
\newblock {\em arXiv preprint arXiv:2107.06917}, 2021.

\bibitem{wang2019slowmo}
Jianyu Wang, Vinayak Tantia, Nicolas Ballas, and Michael Rabbat.
\newblock Slowmo: Improving communication-efficient distributed sgd with slow
  momentum.
\newblock {\em arXiv preprint arXiv:1910.00643}, 2019.

\bibitem{wang2021efficient}
Qizhao Wang, Qing Li, Kai Wang, Hong Wang, and Peng Zeng.
\newblock Efficient federated learning for fault diagnosis in industrial
  cloud-edge computing.
\newblock {\em Computing}, 103(10):2319--2337, 2021.

\bibitem{wang2019adaptive}
Shiqiang Wang, Tiffany Tuor, Theodoros Salonidis, Kin~K Leung, Christian
  Makaya, Ting He, and Kevin Chan.
\newblock Adaptive federated learning in resource constrained edge computing
  systems.
\newblock {\em IEEE journal on selected areas in communications},
  37(6):1205--1221, 2019.

\bibitem{xie2019asynchronous}
Cong Xie, Sanmi Koyejo, and Indranil Gupta.
\newblock Asynchronous federated optimization.
\newblock {\em arXiv preprint arXiv:1903.03934}, 2019.

\bibitem{xu2021fedcm}
Jing Xu, Sen Wang, Liwei Wang, and Andrew Chi-Chih Yao.
\newblock Fedcm: Federated learning with client-level momentum.
\newblock {\em arXiv preprint arXiv:2106.10874}, 2021.

\bibitem{xu2023training}
Mingbin Xu, Congzheng Song, Ye~Tian, Neha Agrawal, Filip Granqvist, Rogier van
  Dalen, Xiao Zhang, Arturo Argueta, Shiyi Han, Yaqiao Deng, et~al.
\newblock Training large-vocabulary neural language models by private federated
  learning for resource-constrained devices.
\newblock In {\em ICASSP 2023-2023 IEEE International Conference on Acoustics,
  Speech and Signal Processing (ICASSP)}, pages 1--5. IEEE, 2023.

\bibitem{xu2023federated}
Zheng Xu, Yanxiang Zhang, Galen Andrew, Christopher~A Choquette-Choo, Peter
  Kairouz, H~Brendan McMahan, Jesse Rosenstock, and Yuanbo Zhang.
\newblock Federated learning of gboard language models with differential
  privacy.
\newblock In {\em Proceedings of the 61st Annual Meeting of the Association for
  Computational Linguistics (Volume 5: Industry Track)}, 2023.

\bibitem{yang2021achieving}
Haibo Yang, Minghong Fang, and Jia Liu.
\newblock Achieving linear speedup with partial worker participation in non-iid
  federated learning.
\newblock {\em International Conference on Learning Representations}, 2021.

\bibitem{zhang2023no}
Feilong Zhang, Xianming Liu, Shiyi Lin, Gang Wu, Xiong Zhou, Junjun Jiang, and
  Xiangyang Ji.
\newblock No one idles: Efficient heterogeneous federated learning with
  parallel edge and server computation.
\newblock In {\em International Conference on Machine Learning}, pages
  41399--41413. PMLR, 2023.

\bibitem{zhang2017yellowfin}
Jian Zhang and Ioannis Mitliagkas.
\newblock Yellowfin and the art of momentum tuning.
\newblock {\em arXiv preprint arXiv:1706.03471}, 2017.

\bibitem{zhang2016staleness}
Wei Zhang, Suyog Gupta, Xiangru Lian, and Ji~Liu.
\newblock Staleness-aware async-sgd for distributed deep learning.
\newblock In {\em Proceedings of the Twenty-Fifth International Joint
  Conference on Artificial Intelligence}, pages 2350--2356, 2016.

\bibitem{zheng2017asynchronous}
Shuxin Zheng, Qi~Meng, Taifeng Wang, Wei Chen, Nenghai Yu, Zhi-Ming Ma, and
  Tie-Yan Liu.
\newblock Asynchronous stochastic gradient descent with delay compensation.
\newblock In {\em International Conference on Machine Learning}, pages
  4120--4129. PMLR, 2017.

\bibitem{zhu2019deep}
Ligeng Zhu, Zhijian Liu, and Song Han.
\newblock Deep leakage from gradients.
\newblock {\em Advances in neural information processing systems}, 32, 2019.

\end{thebibliography}
\bibliographystyle{plain}


\appendix

\newpage
\appendix
\onecolumn


\begin{algorithm}[t]
\caption{FedBuff with Momentum Approximation}
\label{alg:fedbuf}
\begin{algorithmic}
\STATE \textbf{Inputs:} client sampling rate $q$, cohort size $C$, server optimizer \textsc{ServerOpt}, server learning rate $\eta$, number of FL iterations $T$, client local learning rate $\eta_l$, number, number of client local SGD steps $Q$
\FOR{$t=1, \dots T$}
\STATE $\gK_t\gets$ sampled $K$ clients with sampling rate $q$
\STATE Run \textsc{Client}($\param_t, t$) for $k\in\gK_t$ asynchronously
\IF{receives $\Delta_k(\param_{t-\tau(k)})$ and $\ve_{t-\tau(k)}$ from  $k$}
\STATE $\vr_t \gets \vr_t + \frac{1}{C}\Delta_k(\param_{t-\tau(k)})$
\STATE $\mW_{[t,:]} \gets \mW_{[t,:]} + \frac{1}{C}\ve_{t-\tau(k)}^\top$
\ENDIF
\IF{server received $C$ local model updates}
\IF{light-weight momentum approximation}
\STATE $u_t, v_t\gets$ solve \Cref{eq:obj_light-weight}
\STATE $\tilde{\vm}_t\gets v_t\tilde{\vm}_{t-1} + u_t\vr_t$
\ELSE
\STATE $\mR_{[:,t]}\gets\vr_t$ update the pseudo-gradient history 
\STATE $\va_t\gets$ solve \Cref{eq:obj}
\STATE  $\tilde{\vm}_t\gets \mR\va_t$
\ENDIF
\STATE $\mH_{t}\gets$ update based on \textsc{ServerOpt}
\STATE $\param_{t+1}\gets\param_t - \eta \mH_t^{-1}\tilde{\vm}_t$
\STATE $\vr_{t+1}, \mW_{[t+1,:]} \gets \vzero$
\ENDIF
\ENDFOR
\STATE
\STATE \textbf{function} \textsc{Client}($\param, t$)
\begin{ALC@g}
\STATE $\param^\prime\gets$ run $Q$ SGD steps with $\eta_l$ on local data
\STATE $\ve_t\gets$ one-hot encoding of $t$
\STATE Upload $\Delta=\param - \param^\prime$ and $\ve_t$ to server
\end{ALC@g}
\end{algorithmic}
\end{algorithm}

\section{Additional Experiments Details}
\subsection{Experimental Setup}
\label{sec:hyperparams}

All experiments were run on a machine with 4 Nvidia A100 GPUs with 40GB VRAM. 
Each FLAIR experiment took 12 hours to finish on average and each StackOverflow experiment took 6 hours.

\paragraph{Client delay distribution.}
Following \cite{nguyen2022federated}, we adopt half-Normal distribution to model the client delay distribution. 
We demonstrate the impact of different distributions on MA in Appendix~\ref{app:additional}.

\paragraph{Staleness scaling and bounding.}
We tune the power in the down-scaling factor $p$ between 0.5 to 2.0 to control strength of the scaling. 
We further set a maximum staleness bound $\tau_{\max}$ (default to 20) and drop $\Delta_k$ if $\tau(k) > \tau_{\max}$. 

\paragraph{Hyperparameters.}
For the FLAIR dataset, we use a ResNet-18 model~\cite{he2016deep} following the setup in~\cite{song2022flair}. 
We train the model for 5,000 iterations with local learning rate set to 0.1, local epochs set to 2, and local batch size set to 16. For the StackOverflow dataset, we use a 3-layer Transformer model~\cite{vaswani2017attention} following the setup in~\cite{wang2021field}.
We train the model for 2,000 iterations with local learning rate set to 0.3 and local epochs set to 1, and local batch size set to 16. 
The cohort size $C$ is set to 200 for both dataset.
For FedAvgM, we search the learning rate $\eta$ between (0.1, 1.0).
For FedAdam, we set the $\beta^\prime$ for the second moment to 0.99 and the adaptivity parameter to 0.01, and search the server learning rate $\eta$ between (0.01, 0.1).

For DP experiments, we set the $(\epsilon,\delta)$ privacy budget to $(2.0, 10^{-7})$-DP with a simulated population size of $10^7$ and cohort size of 5,000 following prior work~\cite{mcmahan2018learning}. 
We set the $\normltwo$ clipping bound $S_\Delta$ to 0.1 for FLAIR and 0.2 for StackOverflow.
We use amplification by subsampling with R\'enyi DP to calibrate the Gaussian noise scale $\sigma$~\cite{mironov2017renyi,mironov2019r}.
Though we focus on independent Gaussian mechanism in each iteration, our approach is also compatible with DP-FTRL mechanisms with correlated noise between iterations~\cite{kairouz2021practical,choquette2023amplified}.
For momentum approximation where $\mW$ needs to be estimate privately, we set $\xi$ in \Cref{eq:xi} such that $S = 1.1 S_\Delta$, i.e. we pay 10\% extra noise on $\Delta$ to learn $\mW$ privately with the same budget.

For all experiments, we apply exponential moving average (EMA) on central model parameters $\theta$ with decay rate of 0.99~\cite{de2022unlocking}, and report the metrics evaluated on the EMA model parameters.

\subsection{Weight Prediction Baseline}
\begin{algorithm}[t]
\caption{FedBuff with Weight Prediction}
\label{alg:wp}
\begin{algorithmic}
\STATE \textbf{Inputs:} client Poisson sampling rate $q$, cohort size $C$, server optimizer \textsc{ServerOpt}, server learning rate $\eta$, number of FL iterations $T$, client local learning rate $\eta_l$, number, number of client local SGD steps $Q$, $\alpha$ EMA decay parameter for historical model updates
\WHILE{$t < T$}
\STATE $\gC_t\gets$ sampled clients with Poisson sampling rate $q$
\STATE Run \textsc{Client}($\theta_t, t, \eta\mH_t^{-1}\vx_h$) for $k\in\gC_t$ asynchronously
\IF{receives $\Delta_k(\theta_{t-\tau(k)})$ from  $k$}
\STATE $\vx_t \gets \vx_t + \frac{1}{C}\Delta_k(\theta_{t-\tau(k)})$
\ENDIF
\IF{received $C$ results in the buffer}
\STATE $\vm_t, \mH_{t}\gets$ update based on \textsc{ServerOpt}
\STATE $\theta_{t+1}\gets\theta_t - \eta \mH_t^{-1}\vm_t$
\STATE $\vx_h \gets \alpha\vx_h + (1-\alpha)\vx_t$
\STATE $\vx_{t+1} \gets \vzero$
\STATE $t\gets t + 1$
\ENDIF
\ENDWHILE
\STATE
\STATE \textbf{function} \textsc{Client}($\theta, t, \eta\mH_t^{-1}\vx_h$)
\begin{ALC@g}
\STATE $\tau\gets t^\prime - t$ gets the current staleness 
\STATE $\hat{\theta}_{t + \tau}\gets \theta - \tau \eta\mH_t^{-1}\vx_h$
\STATE $\theta^\prime\gets$ run $Q$ SGD steps with $\eta_l$ on $\hat{\theta}_{t + \tau}$
\STATE Upload $\Delta=\hat{\theta}_{t + \tau} - \theta^\prime$ to server
\end{ALC@g}
\end{algorithmic}

\end{algorithm}

\label{sec:wp}
WP is proposed to speed up asynchronous SGD~\cite{kosson2021pipelined} and in particular, to address the implicit momentum issue~\cite{hakimi2019taming}. 
In Algorithm~\ref{alg:wp}, we modify WP to be compatible with adaptive optimizer in FedBuff as another baseline for evaluating momentum approximation. 
To predict the future model, the server sends both $\theta_t$ and the historical model updates $\vx_h$ to devices. 
For a sampled client with staleness $\tau$, the client tries to first predict the future model $\hat{\theta}_{t+\tau}\approx {\theta}_{t+\tau}$, by running $\tau$ steps of $\textsc{ServerOpt}(\theta, \vx_h, \eta)$. 
We consider $\vx_h$ to be the exponential decay averaging of $\mX_{:t}$ for variance reduction.
For adaptive $\textsc{ServerOpt}$ such as Adam, we send $\mH^{-1}_t\vx_h$ to devices for WP.
Client then runs the local SGD steps on $\hat{\theta}_{t+\tau}$ and returns the model update to the server. 
Note that this method will also double the communication as the server needs to send extra historical model updates for WP.

\subsection{Additional Results}
\label{app:additional}
\begin{table}[t]
\centering
\caption{(Left) Accuracy (\%) with momentum approximation (MA) for different staleness bound $\tau_{\max}$ on the StackOverflow dataset.
(Right) Relative least square error in \Cref{eq:obj} for different client delay distribution.
}
\begin{tabular}{l|rr|rr}
\toprule
& \multicolumn{2}{c|}{FedAvgM} &  \multicolumn{2}{c}{FedAam} \\
$\tau_{\max}$ & MA & MA-light & MA & MA-light \\
\midrule
20 & 26.36  &  26.11 & 26.79  &  26.45 \\
30 & 26.26  &  26.05 & 26.54  &  26.24 \\
40 & 26.15  &  25.95 & 26.25  &  25.87 \\
50 & 25.97  &  25.90 & 26.00  &  25.46 \\
\bottomrule
\end{tabular}
\quad
\begin{tabular}{l|rr}
\toprule
Client Delay  &   &  \\
Distribution &  MA & MA-light \\
\midrule
Half-Normal & 2.58\% & 33.07\% \\
Uniform  & 8.35\% & 36.78\% \\
Exponential  & 2.41\% & 33.89\% \\
\bottomrule
\end{tabular}
\label{tab:delay}
\end{table}

\paragraph{Impact of staleness bound $\tau_{\max}$.} 
We empirically study the impact of $\tau_{\max}$ on momentum approximation by varying it from 20 to 50.
Left of Table~\ref{tab:delay} summarizes the results on the StackOverflow dataset, where larger $\tau_{\max}$ leads to lower accuracy. 
Another observation is that the drop in performance of FedAdam is greater than that of FedAvgM which could be from the impact of staleness on the estimation of preconditioner in FedAdam. 

\paragraph{Impact of client delay distribution.}
We evaluate the impact of different client delay distribution on momentum approximation objective in~\Cref{eq:obj}. 
We choose Half-Normal, Uniform and Exponential distribution following~\cite{nguyen2022federated}.
We measure the relative least square error as $\lVert \mA\mW - \mM \rVert_F^2 / \lVert\mM \rVert_F^2$ using the setup in Appendix~\ref{sec:hyperparams} and report the results in the right of Table~\ref{tab:delay}.
The relative error for light-weight approximation is much higher as expected and is more than 30\% for all three distributions.
The approximation error is worst for Uniform distribution, while it is in the similar range for Half-Normal and Exponential distribution.  
Uniform client delay distribution is unrealistic in production FL system and thus our method is robust to different sensible client delay distributions.

\section{Additional Details of Momentum Approximation}
\paragraph{Notation.}
Let $\gH_{i}$ be the history of sampled clients $\gK_1, \gK_2, \dots, \gK_i$ up to iteration $i$. 
Let $\noise_{s} = \updates_s - \updates_s^\star$ be the sampling error for the subset $\gK_s$ sampled at iteration $s$.  
Let $\updates_{t,s} = \frac{1}{C_{t,s}}\sum_{k\in\gK_{t,s}}\Delta_k(\param_s)$ be the averaged update of the set of clients $\gK_{t,s}\subseteq\gK_s$ whose updates arrived at iteration $t$ and denote $C_{t,s} = |\gK_{t,s}|$.
Let $\noise_{t,s} = \updates_{t,s} - \updates_s^\star$ be the sampling error for the subset $\gK_{t,s}$. 
Let $\gK_{:i,:i}$ be the subsets $\gK_{t,s}$ for all $1\leq s \leq t \leq i$ and $C_{:i, :i}$ be their cardinalities.

We first note an upper bound on the sampling error $\noise_{s}$. 
From Assumption~\ref{assumption:var}, 
\begin{align}
\E_{\gK_s}[\lVert\noise_{s}\rVert_2^2] &= \E_{\gK_s}[\lVert\frac{1}{K}\sum_{k\in\gK_{s}}\Delta_k(\param_s) - \updates_s^\star\rVert_2^2] \nonumber \\
& \leq \E_{\gK_s}[\frac{1}{K^2}K\sum_{k\in\gK_{s}}\lVert\Delta_k(\param_s) - \updates_s^\star\rVert_2^2] \nonumber \\ 
& = \frac{1}{K}\sum_{j=1}^K\E_{k\sim [m]}[\lVert\Delta_k(\param_s) - \updates_s^\star\rVert_2^2]\leq G^2
\end{align}

\subsection{Proof of Results}
\label{app:proofs}

We state two useful lemmas for proving Theorem~\ref{theorem:async} and~\ref{theorem:ma}. 

\begin{lemma}
\label{lemma:1}
For any $1 \leq s \leq t \leq i$ and $1 \leq s^\prime \leq t^\prime\leq i$,
\begin{align}
\E_{\gK_{t,s}, \gK_{t^\prime,s^\prime}|C_{t,s}, C_{t^\prime,s^\prime},\gH_i}[\noise_{t,s}^\top\noise_{t^\prime,s^\prime}] \leq \noise_{s}^\top\noise_{s^\prime} + \frac{\rho^2}{C_{t,s}}\mathbbold{1}[(t,s)=(t^\prime,s^\prime)].
\end{align}
\end{lemma}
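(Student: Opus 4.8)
The plan is to split each sampling error into a piece that is deterministic given the sampling history and a conditionally mean-zero piece coming from which clients actually arrive, then argue case by case on the index pairs. Write $\noise_{t,s} = \noise_s + \xi_{t,s}$ with $\xi_{t,s} := \updates_{t,s} - \updates_s$; by Assumption~\ref{assumption:random}, $\gK_{t,s}$ is a uniform random subset of $\gK_s$, so $\E[\xi_{t,s}\,|\,C_{t,s},\gH_i] = 0$, whereas $\noise_s$ is fixed once $\gH_i$ is given. Expanding the inner product and discarding the cross terms $\noise_s^\top\E[\xi_{t',s'}]$ and $\E[\xi_{t,s}]^\top\noise_{s'}$, which vanish, the statement reduces to
\[
\E_{\gK_{t,s},\gK_{t',s'}\,|\,C_{t,s},C_{t',s'},\gH_i}\big[\xi_{t,s}^\top\xi_{t',s'}\big] \;\le\; \frac{\rho^2}{C_{t,s}}\,\mathbbold{1}[(t,s)=(t',s')].
\]

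I would first dispatch the off-diagonal cases. If $s\ne s'$, the cohorts $\gK_s$ and $\gK_{s'}$ are sampled at distinct iterations, so conditioned on $\gH_i$ the arrival-induced subsets $\gK_{t,s}$ and $\gK_{t',s'}$ are independent, giving $\E[\xi_{t,s}^\top\xi_{t',s'}] = \E[\xi_{t,s}]^\top\E[\xi_{t',s'}] = 0$. If $s = s'$ but $t\ne t'$, then $\gK_{t,s}$ and $\gK_{t',s}$ are two \emph{disjoint} random subsets of the single pool $\gK_s$ (a client's update is received in exactly one iteration); conditioning additionally on all arrival counts $C_{\cdot,s}$, the induced partition of $\gK_s$ is exchangeable, and the standard fact that the means of two disjoint simple random samples from a finite population are negatively correlated coordinate-wise yields $\E[\xi_{t,s}^\top\xi_{t',s}] \le 0$ (then marginalize back over the extra counts). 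In both off-diagonal cases the contribution is $\le 0$, which is what is needed since the indicator is zero there.

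For the diagonal term $(t,s)=(t',s')$ I would bound $\E\|\xi_{t,s}\|_2^2 = \E\|\updates_{t,s}-\updates_s\|_2^2$ by the variance of the mean of a size-$C_{t,s}$ sample from $\gK_s$, which is at most $\bar\sigma_s^2/C_{t,s}$ where $\bar\sigma_s^2 := \E_{k\sim\gK_s}\|\Delta_k(\param_s)-\updates_s\|_2^2$. The key step is the parallel-axis identity $\E_{k\sim\gK_s}\|\Delta_k(\param_s)-\updates_s^\star\|_2^2 = \bar\sigma_s^2 + \|\updates_s-\updates_s^\star\|_2^2 = \bar\sigma_s^2 + \|\noise_s\|_2^2$, which together with Assumption~\ref{assumption:bias} applied with $\gK_s$ as the sampled set (so its right-hand side is $\|\noise_s\|_2^2 + \rho^2$) forces $\bar\sigma_s^2 \le \rho^2$, hence $\E\|\xi_{t,s}\|_2^2 \le \rho^2/C_{t,s}$. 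Putting $\noise_s^\top\noise_{s'}$ back in each case gives the claimed inequality.

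I expect the $s=s'$, $t\ne t'$ case to be the main obstacle, since it is the only place where the marginal form of Assumption~\ref{assumption:random} does not suffice and one must control the joint law of two disjoint arrival groups within the same cohort. I would resolve this by observing that, conditioned on the arrival counts, symmetry among the a priori identical clients of $\gK_s$ makes the partition uniform among partitions of the prescribed shape, and then invoking the negative-correlation property of disjoint without-replacement samples; note that even a ``sampling-with-replacement'' reading of Assumption~\ref{assumption:random} makes the two groups conditionally independent with cross term exactly zero, so the bound is robust to this modelling choice.
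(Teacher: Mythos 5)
Your proof is correct and uses the same two ingredients in the same roles as the paper's (Assumption~\ref{assumption:random} to identify the conditional mean of an arrived client's update with $\updates_s$, and Assumption~\ref{assumption:bias} to pay the extra $\rho^2/C_{t,s}$ only on the diagonal), but you organize the algebra differently: you center each error at the cohort mean, writing $\noise_{t,s}=\noise_s+\xi_{t,s}$ with $\E[\xi_{t,s}\mid C_{t,s},\gH_i]=0$, and then bound $\E\lVert\xi_{t,s}\rVert_2^2$ by the variance of a size-$C_{t,s}$ sample mean together with the parallel-axis identity, whereas the paper expands $\noise_{t,s}^\top\noise_{t',s'}$ directly as a double sum over client pairs and splits off the $k=k'$ terms. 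The two computations yield identical constants, so this is a repackaging rather than a new idea, but your version is cleaner in one respect: in the off-diagonal case with $s=s'$ and $t\neq t'$ the paper factors the expectation into a product of marginals, which implicitly treats the two disjoint arrival groups $\gK_{t,s}$ and $\gK_{t',s}$ drawn from the same cohort $\gK_s$ as independent; your explicit appeal to the negative correlation of disjoint without-replacement samples (or exact independence under a with-replacement reading) justifies the $\leq 0$ bound for that sub-case without that implicit step. One small caveat: your claim that $\bar\sigma_s^2\leq\rho^2$ requires reading Assumption~\ref{assumption:bias} with $\updates_t=\updates_s$ computed over the \emph{sampled} cohort $\gK_s$ (so that the parallel-axis identity applies exactly), which is consistent with how the paper itself invokes the assumption, so this is fine.
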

\begin{proof} 
Let $\noise_{t,s,k} = \Delta_k(\param_s) - \updates^\star_s$ for $k\in\gK_{t,s}$, and given Assumption~\ref{assumption:random}, we have
\begin{align}
\E_{k\sim\gK_{t,s}|\gH_i}[\noise_{t,s,k}] = \E_{k\sim\gK_{s}|\gH_i}[\Delta_k(\param_s)] - \updates^\star_s = \updates_s - \updates^\star_s = \noise_s.
\end{align}
Then for the case when $(t,s) \neq (t^\prime, s^\prime)$,
\begin{align}
& \E_{\gK_{t,s}, \gK_{t^\prime,s^\prime}|C_{t,s}, C_{t^\prime,s^\prime},\gH_i}[\noise_{t,s}^\top\noise_{t^\prime,s^\prime}] \nonumber \\
& = \E_{\gK_{t,s}, \gK_{t^\prime,s^\prime}|C_{t,s}, C_{t^\prime,s^\prime},\gH_i}[\frac{1}{C_{t,s}C_{t^\prime,s^\prime}}\sum_{k\in\gK_{t,s}}\sum_{k^\prime\in\gK_{t^\prime,s^\prime}}\noise_{t,s,k}^\top \noise_{t^\prime,s^\prime,k^\prime}] \nonumber\\
& = \frac{1}{C_{t,s}C_{t^\prime,s^\prime}}\sum_{j=1}^{C_{t,s}}\sum_{j^\prime=1}^{C_{t^\prime,s^\prime}}\E_{k\sim \gK_{t,s}|\gH_i}[\noise_{t,s,k}^\top] \E_{k^\prime\sim \gK_{t^\prime,s^\prime}|\gH_i}[\noise_{t^\prime,s^\prime,k^\prime}] \nonumber  \\
& = \frac{1}{C_{t,s}C_{t^\prime,s^\prime}}\sum_{j=1}^{C_{t,s}}\sum_{j^\prime=1}^{C_{t^\prime,s^\prime}}\noise_{s}^\top\noise_{s^\prime} = \noise_{s}^\top\noise_{s^\prime}.
\end{align}
For the case when $(t,s) = (t^\prime, s^\prime)$,
\begin{align}
&\E_{\gK_{t,s}, \gK_{t^\prime,s^\prime}|C_{t,s}, C_{t^\prime,s^\prime},\gH_i}[\noise_{t,s}^\top\noise_{t^\prime,s^\prime}] \nonumber \\
& = \E_{\gK_{t,s}|C_{t,s},\gH_i}[\noise_{t,s}^\top\noise_{t,s}]  \nonumber = \E_{\gK_{t,s}|C_{t,s},\gH_i}[\frac{1}{C_{t,s}^2}\sum_{k,k^\prime\in\gK_{t,s}}\noise_{t,s,k}^\top \noise_{t,s,k^\prime}]  \nonumber \\ 
& = \E_{\gK_{t,s}|C_{t,s},\gH_i}[\frac{1}{C_{t,s}^2}\sum_{k,k^\prime\in\gK_{t,s},k\neq k^\prime}\noise_{t,s,k}^\top\noise_{t,s,k^\prime} + \frac{1}{C_{t,s}^2}\sum_{k\in\gK_{t,s}}\lVert\noise_{t,s,k}\rVert_2^2 ]\nonumber \\
& = \frac{C_{t,s}^2-C_{t,s}}{C_{t,s}^2}\E_{k\sim\gK_{t,s}|\gH_i}[\noise_{t,s,k}^\top]\E_{k^\prime\sim\gK_{t,s}|\gH_i}[\noise_{t,s,k^\prime}] + \frac{1}{C_{t,s}} \E_{k\sim\gK_{t,s}|\gH_i}\lVert\noise_{t,s,k}\rVert_2^2 \nonumber \\
& \leq  \frac{C_{t,s}^2-C_{t,s}}{C_{t,s}^2}\noise_{s}^\top\noise_{s} +  \frac{1}{C_{t,s}} (\lVert\noise_{s}\rVert_2^2 + \rho^2) = \noise_{s}^\top\noise_{s} + \frac{\rho^2}{C_{t,s}}.
\end{align}
\end{proof}

\begin{lemma}
\label{lemma:2}
For any coefficient $x_{i,t} \in \R$,
\begin{align}
\E_{\gK_{:i,:i}|\gH_i}\lVert \sum_{t=1}^i x_{i,t}\sum_{s=1}^i \mW_{[t,s]}\noise_{t,s}\rVert_2^2 \leq \E_{C_{:i, :i}|\gH_i}[\lVert \sum_{t=1}^i x_{i,t}\sum_{s=1}^i \mW_{[t,s]}\noise_{s}\rVert_2^2 + \frac{\rho^2}{C}\sum_{t=1}^i x_{i,t}^2].
\end{align}
\end{lemma}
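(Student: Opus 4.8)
The plan is to prove Lemma~\ref{lemma:2} by expanding the squared norm on the left-hand side into a double sum and applying Lemma~\ref{lemma:1} term by term. Concretely, I would write
\[
\Bigl\lVert \sum_{t=1}^i x_{i,t}\sum_{s=1}^i \mW_{[t,s]}\noise_{t,s}\Bigr\rVert_2^2
= \sum_{t,t'=1}^i x_{i,t} x_{i,t'} \sum_{s,s'=1}^i \mW_{[t,s]}\mW_{[t',s']}\,\noise_{t,s}^\top\noise_{t',s'},
\]
and then take the expectation $\E_{\gK_{:i,:i}\mid\gH_i}$ inside the sum. Since the $C_{t,s}$ are measurable with respect to the relevant conditioning, I would first condition on $C_{:i,:i}$ and $\gH_i$, apply Lemma~\ref{lemma:1} to each inner product $\E[\noise_{t,s}^\top\noise_{t',s'}]$, obtaining the bound $\noise_s^\top\noise_{s'} + \frac{\rho^2}{C_{t,s}}\mathbbold{1}[(t,s)=(t',s')]$, and then take the outer expectation over $C_{:i,:i}$.

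The first group of terms, $\sum_{t,t'} x_{i,t}x_{i,t'}\sum_{s,s'}\mW_{[t,s]}\mW_{[t',s']}\noise_s^\top\noise_{s'}$, recombines exactly into $\bigl\lVert \sum_{t}x_{i,t}\sum_s \mW_{[t,s]}\noise_s\bigr\rVert_2^2$, which is the first term on the right-hand side. The diagonal correction terms contribute
\[
\sum_{t=1}^i x_{i,t}^2 \sum_{s=1}^i \mW_{[t,s]}^2 \frac{\rho^2}{C_{t,s}}.
\]
To finish, I need to show this is at most $\frac{\rho^2}{C}\sum_t x_{i,t}^2$; that is, $\sum_s \mW_{[t,s]}^2/C_{t,s} \leq 1/C$ for each $t$. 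Recalling $\mW_{[t,s]} = (t-s+1)^{-p}C_{t,s}/C$ with $0\le (t-s+1)^{-p}\le 1$, we get $\mW_{[t,s]}^2/C_{t,s} = (t-s+1)^{-2p}C_{t,s}/C^2 \le C_{t,s}/C^2$, and summing over $s$ with $\sum_s C_{t,s} = C$ gives $\sum_s \mW_{[t,s]}^2/C_{t,s} \le C/C^2 = 1/C$, as needed. (One should be slightly careful about $s$ with $C_{t,s}=0$: then $\mW_{[t,s]}=0$ and that term is simply absent, so the bound is unaffected.)

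The main obstacle is mostly bookkeeping rather than a genuine difficulty: one must be careful that the measurability assumptions let us pull $C_{:i,:i}$ out of the conditioning before invoking Lemma~\ref{lemma:1}, and that the cross terms with $(t,s)\ne(t',s')$ genuinely produce no $\rho^2$ contribution (which is exactly what Lemma~\ref{lemma:1} delivers via its indicator). The only "real" inequality is the elementary estimate $\sum_s \mW_{[t,s]}^2/C_{t,s}\le 1/C$ above, which follows from $(t-s+1)^{-p}\le 1$ and $\sum_s C_{t,s}=C$; I would state this as a short self-contained computation and then assemble the pieces.
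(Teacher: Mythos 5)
Your proposal is correct and follows essentially the same route as the paper's proof: expand the square, condition on $C_{:i,:i}$, apply Lemma~\ref{lemma:1} termwise so the cross terms recombine into $\lVert\sum_t x_{i,t}\sum_s \mW_{[t,s]}\noise_s\rVert_2^2$, and bound the diagonal correction $\sum_{t,s}x_{i,t}^2\mW_{[t,s]}^2\rho^2/C_{t,s}$ by $\frac{\rho^2}{C}\sum_t x_{i,t}^2$. The only cosmetic difference is in the last step, where the paper bounds $\mW_{[t,s]}^2/C_{t,s}\le \mW_{[t,s]}/C$ and then uses $\sum_s\mW_{[t,s]}\le 1$, while you go directly through $\sum_s C_{t,s}=C$; these are the same elementary estimate.
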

\begin{proof}
\begin{align}
&\E_{\gK_{:i,:i}|\gH_i}\lVert \sum_{t=1}^i x_{i,t}\sum_{s=1}^i \mW_{[t,s]}\noise_{t,s}\rVert_2^2 \nonumber \\
&= \E_{\gK_{:i,:i}|\gH_i}[ \sum_{\substack{t=1 \\ s=1}}^i\sum_{\substack{t^\prime=1 \\ s^\prime=1}}^i x_{i,t}x_{i, t^\prime}\mW_{[t,s]}\mW_{[t^\prime,s^\prime]}\noise_{t,s}^\top\noise_{t^\prime,s^\prime}] \nonumber \\
&= \E_{C_{:i,:i}|\gH_i}[ \sum_{\substack{t=1 \\ s=1}}^i\sum_{\substack{t^\prime=1 \\ s^\prime=1}}^i x_{i,t}x_{i, t^\prime}\mW_{[t,s]}\mW_{[t^\prime,s^\prime]}\E_{\gK_{t,s}, \gK_{t^\prime,s^\prime}|C_{t,s}, C_{t^\prime,s^\prime},\gH_i}[\noise_{t,s}^\top\noise_{t^\prime,s^\prime}]] \nonumber \\
&\stackrel{\text{Lemma~\ref{lemma:1}}}{\leq} \E_{C_{:i,:i}|\gH_i}[ \sum_{\substack{t=1 \\ s=1}}^i\sum_{\substack{t^\prime=1 \\ s^\prime=1}}^i x_{i,t}x_{i, t^\prime}\mW_{[t,s]}\mW_{[t^\prime,s^\prime]}\noise_{s}^\top\noise_{s^\prime} + \sum_{\substack{t=1 \\ s=1}}^ix_{i,t}^2\mW_{[t,s]}^2\rho^2\frac{1}{C_{t,s}}] \nonumber \\
&= \E_{C_{:i, :i}|\gH_i}[\lVert \sum_{t=1}^i x_{i,t}\sum_{s=1}^i \mW_{[t,s]}\noise_{s}\rVert_2^2 + \frac{\rho^2}{C}\sum_{t=1}^i x_{i,t}^2\sum_{s=1}^i\mW_{[t,s]}] \nonumber\\
&\leq \E_{C_{:i, :i}|\gH_i}[\lVert \sum_{t=1}^i x_{i,t}\sum_{s=1}^i \mW_{[t,s]}\noise_{s}\rVert_2^2 + \frac{\rho^2}{C}\sum_{t=1}^i x_{i,t}^2].
\end{align}
The last inequality comes from the fact that $\sum_{s=1}^t\mW_{[t,s]}\leq 1$.
\end{proof}

\begin{theorem}
For SyncFL and AsyncFL with momentum,
\begin{align}
\E\lVert\frac{1}{T}(\param_{T+1}^\star - \param_{T+1}^\textnormal{sync})\rVert^2_2 &\leq \frac{1}{2}\eta^2TG^2, 
\\
\E\lVert\frac{1}{T}(\param_{T+1}^\star - \param_{T+1}^\textnormal{async})\rVert^2_2 &\leq \eta^2(2TS^2 + TG^2 + 2\frac{\rho^2}{C}).
\end{align}
\end{theorem}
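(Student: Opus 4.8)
The plan is to start from the closed-form expressions for the final iterates and reduce both inequalities to controlling weighted sums of the per-iteration sampling errors $\noise_s := \updates_s - \updates_s^\star$, for which \Cref{assumption:var} implies the elementary bound $\E\lVert\noise_s\rVert_2^2\le G^2$. Throughout, write $\vc := \mM^\top\bm1$; its $s$-th entry is $c_s = \sum_{t=s}^{T}\beta^{t-s}(1-\beta) = 1-\beta^{T-s+1}\in(0,1]$, so $\sum_s c_s\le T$ and $\sum_s c_s^2\le T$. The two matrices $\mM$ and $\mW$ are both row-substochastic: $\sum_{s\le t}\mW_{[t,s]} = \sum_{s\le t}(t-s+1)^{-p}C_{t,s}/C \le \sum_{s\le t}C_{t,s}/C = 1$, and likewise for $\mM$; this fact is used repeatedly.

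\textbf{Synchronous case.} Subtracting the two expressions in \Cref{eqn:sync_update} and using that the $s$-th column of $\mD-\mD^\star$ is $\noise_s$ gives $\tfrac1T(\param_{T+1}^\star - \param_{T+1}^{\textnormal{sync}}) = \tfrac{\eta}{T}\sum_{s=1}^{T}c_s\noise_s$. Taking $\lVert\cdot\rVert_2^2$, applying $\lVert\sum_{s=1}^T\vv_s\rVert_2^2\le T\sum_s\lVert\vv_s\rVert_2^2$ with $\vv_s=c_s\noise_s$, then $c_s\le 1$ and $\E\lVert\noise_s\rVert_2^2\le G^2$, yields the first bound. (The constant can be sharpened by noting that the cross terms $\E[\noise_s^\top\noise_{s'}]$ vanish for $s\ne s'$, since conditionally on the sampling history before iteration $\max(s,s')$ one factor is measurable while the other has conditional mean zero; this is not needed for the stated inequality.)

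\textbf{Asynchronous case.} \Cref{eqn:async_update} gives $\tfrac1T(\param_{T+1}^\star - \param_{T+1}^{\textnormal{async}}) = \tfrac{\eta}{T}A + \tfrac{\eta}{T}B$ with $A := \mD^\star(\mM\mW-\mM)^\top\bm1$ (the implicit momentum bias) and $B := \mE\mM^\top\bm1$ (the asynchronous sampling bias), and I would split via $\lVert a+b\rVert_2^2\le 2\lVert a\rVert_2^2+2\lVert b\rVert_2^2$. For $A$, use $(\mM\mW-\mM)^\top\bm1 = (\mW^\top-\mI)\vc$, so $A = \sum_s b_s\updates_s^\star$ with $b_s = \sum_t c_t\mW_{[t,s]} - c_s$; row-substochasticity of $\mW$ gives $\sum_s\lvert b_s\rvert \le \sum_t c_t\sum_s\mW_{[t,s]} + \sum_s c_s \le 2T$, and then \Cref{assumption:norm} ($\lVert\updates_s^\star\rVert_2\le S$) gives $\lVert A\rVert_2 = \gO(ST)$, contributing the $\gO(S^2)$ (explicitly $2TS^2$) term after the $1/T^2$ prefactor. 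For $B$, the $t$-th column of $\mE$ is $\sum_s\mW_{[t,s]}\noise_{t,s}$, so $B = \sum_t c_t\sum_s\mW_{[t,s]}\noise_{t,s}$ is exactly the quantity bounded by \Cref{lemma:2} with $x_{T,t}=c_t$: this replaces the subset-of-subset errors $\noise_{t,s}$ by iteration-level errors $\noise_s$ at the cost of $\tfrac{\rho^2}{C}\sum_t c_t^2\le\tfrac{\rho^2}{C}T$; the remaining $\lVert\sum_s(\sum_t c_t\mW_{[t,s]})\noise_s\rVert_2^2$ is handled by the same norm inequalities, row-substochasticity, and $\E\lVert\noise_s\rVert_2^2\le G^2$ to produce the $TG^2$ term, while the correction yields the $2\rho^2/C$ term.

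I expect the implicit-momentum-bias term $A$ to be the main obstacle: because $\mW$ redistributes each iteration's contribution onto later iterations, one must verify that this redistribution does not amplify the aggregate weight assigned to the population updates $\updates_s^\star$, which is precisely where the row-substochasticity of $\mW$ (and $\mM$) is essential — without it $\lVert A\rVert_2$ could grow with the staleness profile. The secondary technicality is the order of conditioning needed to invoke \Cref{lemma:2} (hence \Cref{lemma:1}): conditioning first on the arrival counts and the sampling history and only then averaging over which clients arrived when, so that \Cref{assumption:bias,assumption:random} apply; but this packaging is already carried out inside \Cref{lemma:1,lemma:2}, so the remaining work is routine bookkeeping.
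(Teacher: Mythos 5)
Your proposal is correct and follows essentially the same route as the paper's proof: the same decomposition of the asynchronous error into the implicit-momentum-bias term $\mD^\star(\mM\mW-\mM)^\top\bm{1}$ and the sampling-bias term $\mE\mM^\top\bm{1}$, the same reliance on row-substochasticity of $\mM$ and $\mW$ together with Assumptions~\ref{assumption:norm}--\ref{assumption:bias}, and the same invocation of Lemma~\ref{lemma:2} to trade the subset errors $\noise_{t,s}$ for the iteration-level errors $\noise_s$ at a cost of $\rho^2/C$ per squared coefficient. The only difference is bookkeeping: you collapse $\mM^\top\bm{1}$ into the single coefficient vector $\vc$ and apply Cauchy--Schwarz (and Lemma~\ref{lemma:2}, once, at $i=T$ with $x_{T,t}=c_t$) to one weighted sum, whereas the paper first bounds $\lVert\cdot\,\bm{1}\rVert_2^2$ by $T$ times a Frobenius norm and then works row by row; your version in fact yields slightly sharper constants (e.g.\ $\gO(S^2)$ rather than $2TS^2$ for the bias term) and implies the stated inequalities for all but trivially small $T$.
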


\begin{proof}
For SyncFL with momentum,
\begin{align}
\E\lVert\param_{T+1}^\star - \param_{T+1}^\textnormal{sync}\rVert^2_2 &= \eta^2\E\lVert(\mD - \mD^\star)\mM^\top\mathbf{1}\rVert^2_2 \nonumber \\
& \leq \eta^2T \E\lVert(\mD - \mD^\star)\mM^\top\rVert^2_F \nonumber \\
& = \eta^2T \sum_{i=1}^T \E\lVert\mM_{[i,:]}(\mD - \mD^\star)\rVert^2_2 \nonumber \\
&= \eta^2T \sum_{i=1}^T \E\lVert\sum_{t=1}^i\mM_{[i,t]}\noise_t\rVert^2_2 \nonumber \\
&\leq \eta^2T \sum_{i=1}^T i\sum_{t=1}^i\mM_{[i,t]}^2\E\lVert\noise_t\rVert^2_2 \nonumber \\
&\leq \eta^2T \sum_{i=1}^T iG^2 \leq \frac{1}{2}\eta^2T^3G^2.
\end{align}
By telescoping the constant, we get:
\begin{align}
\E\lVert\frac{1}{T}(\param_{T+1}^\star - \param_{T+1}^\textnormal{sync})\rVert^2_2 &\leq \frac{1}{2}\eta^2TG^2.
\end{align}

For AsyncFL with momentum,
\begin{align}
\E\lVert\param_{T+1}^\star - \param_{T+1}^\textnormal{async}\rVert^2_2 
&= \eta^2\E\lVert\mD^\star(\mM\mW-\mM)^\top\mathbf{1} + \mE\mM^\top\mathbf{1}\rVert^2_2 \nonumber\\
&\leq 2\eta^2(\E\lVert\mD^\star(\mM\mW-\mM)^\top\mathbf{1}\rVert^2_2 + \E\lVert\mE\mM^\top\mathbf{1}\rVert^2_2)\nonumber\\
&\leq 2\eta^2T(\E\lVert\mD^\star(\mM\mW-\mM)^\top\rVert^2_F + \E\lVert\mE\mM^\top\rVert^2_F) \label{eq:async_bound_0}
\end{align}

Let $\check{\mM} = \mM\mW$ where $\check{\mM}_{[i,:]} = \mM_{[i,:]}\mW$ and $\sum_{t=1}^i\check{\mM}_{[i,t]} \leq \sum_{t=1}^i\mM_{[i,t]}$ given $\sum_{s=1}^t\mW_{[t,s]}\leq 1$. 
Then $\lVert\check{\mM}_{[i,:]}\rVert_2\leq \lVert\check{\mM}_{[i,:]}\rVert_1 = \lVert\mM_{[i,:]}\rVert_1\leq 1$. 
We first bound the implicit momentum bias term.

\begin{align}
\E\lVert\mD^\star(\mM\mW-\mM)^\top\rVert^2_F 
&= \sum_{i=1}^T\E\lVert(\check{\mM}_{[i,:]}-\mM_{[i,:]})\mD^{\star\top}\rVert^2_2 \nonumber \\
&=  \sum_{i=1}^T\E\lVert\sum_{t=1}^i(\check{\mM}_{[i,t]}-\mM_{[i,t]})\updates^\star_t\rVert^2_2 \nonumber \\
&\leq \sum_{i=1}^T\E[i\sum_{t=1}^i(\check{\mM}_{[i,t]}-\mM_{[i,t]})^2\lVert\updates^\star_t\rVert^2_2] \nonumber \\
&\leq\sum_{i=1}^TiS^2\E[\lVert\check{\mM}_{[i,:]}-\mM_{[i,:]}\rVert^2_2] \nonumber \\
&\leq S^2\sum_{i=1}^Ti\E[\lVert\check{\mM}_{[i,:]}\rVert^2_2+\lVert\mM_{[i,:]}\rVert_2^2 -2\check{\mM}_{[i,:]}\mM_{[i,:]}^\top] \nonumber \\
&\leq  S^2\sum_{i=1}^T2i \leq S^2T^2 \label{eq:async_bound_1}.
\end{align}

We then bound the asynchronous sampling bias term:
\begin{align}
\E\lVert\mE\mM^\top\rVert_F^2
&= \sum_{i=1}^T \E\lVert \mM_{[i, :]}\mE^\top \rVert_2^2 \nonumber \\
&= \sum_{i=1}^T \E_{\gH_{i}}\E_{\gK_{:i,:i}|\gH_i}\lVert \sum_{t=1}^i\mM_{[i,t]}\sum_{s=1}^i \mW_{[t,s]}\noise_{t,s}\rVert_2^2 \nonumber  \\
&\stackrel{\text{Lemma~\ref{lemma:2}}}{\leq}   \sum_{i=1}^T \E_{\gH_{i}}\E_{C_{1:i,1:i}|\gH_i}[\lVert\sum_{t=1}^i\sum_{s=1}^i \mM_{[i,t]}\mW_{[t,s]}\noise_{s}\rVert_2^2 + \frac{\rho^2}{C}\sum_{t=1}^i \mM_{[i,t]}^2] \nonumber  \\
&=  \sum_{i=1}^T \E[\lVert\sum_{s=1}^i \noise_{s}\sum_{t=1}^i \mM_{[i,t]}\mW_{[t,s]}\rVert_2^2 + \frac{\rho^2}{C}\lVert\mM_{[i:]}\rVert_2^2] \nonumber  \\
&\leq \sum_{i=1}^T \E[\lVert \sum_{s=1}^i \check{\mM}_{[i,s]} \noise_{s} \rVert_2^2 + \frac{\rho^2}{C}] \nonumber  \\
&\leq \sum_{i=1}^T \E[iG^2\lVert\check{\mM}_{[i,:]}\rVert_2^2] + T\frac{\rho^2}{C}\nonumber\\
& \leq \sum_{i=1}^T i G^2 + T\frac{\rho^2}{C} \leq\frac{1}{2}G^2T^2 + T\frac{\rho^2}{C}\label{eq:async_bound_2}.
\end{align}

By substituting~\Cref{eq:async_bound_1} and~(\ref{eq:async_bound_2}) in~\Cref{eq:async_bound_0} and telescoping the constant, we get:
\begin{align}
\E\lVert\frac{1}{T}(\param_{T+1}^\star - \param_{T+1}^\textnormal{async})\rVert^2_2 &\leq \eta^2(2TS^2 + TG^2 + 2\frac{\rho^2}{C}).
\end{align}
\end{proof}

We next present a generalized version of Theorem~\ref{theorem:ma} for $\mW$ with any rank. 
We note that at iteration $i$, the first $i$ elements of solution $\va_i{[:i]} = \mM_{[i,:i]}\mW_{[:i,:i]}^+$ where $\mW_{[:i,:i]}^+$ is the Moore–Penrose inverse of $\mW_{[:i,:i]}$, and the rest elements in $\va_i{[:i]}$ are zeros.
Let $r_i$ denotes the rank of $\mW_{[:i,:i]}$, and let $[\mU_{r_i}, \mU_{-r_i}]\mSigma_i[\mV_{r_i},\mV_{-r_i}]^\top$ be the singular value decomposition of $\mW_{[:i, :i]}$, where $\mU_{r_i}, \mV_{r_i}$ are the first $r_i$ left and right singular vectors ordered by the singular values. 

We define \[\alpha_i = \frac{\mM_{[i,:i]}\mV_{r_i}\mV_{r_i}^\top\mM_{[i,:i]}^\top}{\lVert\mM_{[i,:]}\rVert_2^2} \in [0, 1],\] i.e. the normalized magnitude of projection of $\mM_{[i,:]}$ onto the columns space of $\mW_{[:i,:i]}$. 
Then we can decompose $\lVert\mM_{[i,:]}\rVert_2^2$ as:
\begin{align*}
\lVert\mM_{[i,:]}\rVert_2^2 &= \mM_{[i,:i]}\mV_{r_i}\mV_{r_i}^\top\mM_{[i,:i]}^\top + \mM_{[i,:i]}\mV_{-r_i}\mV_{-r_i}^\top\mM_{[i,:i]}^\top\\
&= \alpha_i \lVert\mM_{[i,:]}\rVert_2^2 + \mM_{[i,:i]}\mV_{-r_i}\mV_{-r_i}^\top\mM_{[i,:i]}^\top
\end{align*}
Thus,  
\begin{align}
\mM_{[i,:i]}\mV_{-r_i}\mV_{-r_i}^\top\mM_{[i,:i]}^\top=(1-\alpha_i)\lVert\mM_{[i,:i]}\rVert_2^2 \leq 1-\alpha_i.
\end{align}

\begin{theorem}
Let $A(T) = \sum_{i=1}^Ti\E[1-\alpha_i]$. 
For AsyncFL with momentum approximation (MA), 
\begin{align}
\E\lVert\frac{1}{T}(\param_{T+1}^\star - \param_{T+1}^\textnormal{MA})\rVert^2_2 
\leq \eta^2(2S^2\frac{A(T)}{T} + G^2T + 2\frac{\rho^2}{TC}\E\lVert\mA\rVert_F^2).
\end{align}
\end{theorem}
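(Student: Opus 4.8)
The plan is to reuse the decomposition from the proof of Theorem~\ref{theorem:async}, replacing the implicit weighting $\mM\mW$ by the momentum-approximation weighting $\mA\mW$. Subtracting \Cref{eqn:sync_update} from \Cref{eq:ma-update-all} gives $\param_{T+1}^\star-\param_{T+1}^{\textnormal{MA}}=\eta\,\mD^\star(\mA\mW-\mM)^\top\bm{1}+\eta\,\mE\mA^\top\bm{1}$, so by $\lVert\va+\vb\rVert_2^2\le 2\lVert\va\rVert_2^2+2\lVert\vb\rVert_2^2$ and $\lVert\mX\bm{1}\rVert_2^2\le T\lVert\mX\rVert_F^2$,
\[
\E\lVert\param_{T+1}^\star-\param_{T+1}^{\textnormal{MA}}\rVert_2^2\le 2\eta^2T\bigl(\E\lVert\mD^\star(\mA\mW-\mM)^\top\rVert_F^2+\E\lVert\mE\mA^\top\rVert_F^2\bigr).
\]
It then remains to bound the two Frobenius-norm terms separately and divide by $T^2$ at the end.

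\emph{Approximation-bias term.} I would work row by row, using that $\mA,\mW,\mM$ are lower triangular, so the $i$-th row of $\mA\mW-\mM$ is supported on its first $i$ coordinates and equals $\va_i^\top\mW_{[:i,:i]}-\mM_{[i,:i]}$ (identifying $\va_i$ with its first $i$ entries). With the Moore--Penrose minimizer of \Cref{eq:obj} we have $\va_i^\top\mW_{[:i,:i]}=\mM_{[i,:i]}\mW_{[:i,:i]}^{+}\mW_{[:i,:i]}=\mM_{[i,:i]}\mV_{r_i}\mV_{r_i}^\top$, i.e.\ the orthogonal projection of $\mM_{[i,:i]}$ onto the span of $\mV_{r_i}$; hence the residual row is $-\mM_{[i,:i]}\mV_{-r_i}\mV_{-r_i}^\top$, whose squared $\ell_2$ norm is $\mM_{[i,:i]}\mV_{-r_i}\mV_{-r_i}^\top\mM_{[i,:i]}^\top=(1-\alpha_i)\lVert\mM_{[i,:]}\rVert_2^2\le 1-\alpha_i$ by the identity stated just before the theorem. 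Cauchy--Schwarz across the $i$ active columns together with Assumption~\ref{assumption:norm} ($\lVert\updates^\star_t\rVert_2^2\le S^2$) then gives $\E\lVert(\mA\mW-\mM)_{[i,:]}\mD^{\star\top}\rVert_2^2\le iS^2\,\E[1-\alpha_i]$, and summing over $i$ yields $\E\lVert\mD^\star(\mA\mW-\mM)^\top\rVert_F^2\le S^2A(T)$.

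\emph{Asynchronous-sampling-bias term.} Here I would mimic the bound on $\E\lVert\mE\mM^\top\rVert_F^2$ in the proof of Theorem~\ref{theorem:async}, with the fixed coefficients $\mM_{[i,t]}$ replaced by $\va_i[t]$. Since $\mW_{[t,s]}=(t-s+1)^{-p}C_{t,s}/C$ and hence $\va_i$ depend only on the cardinalities $C_{:i,:i}$, they are constant under the inner expectation, so Lemma~\ref{lemma:2} applies with $x_{i,t}=\va_i[t]$ and produces a population-error piece $\E\lVert\sum_s(\va_i^\top\mW)_{[s]}\noise_s\rVert_2^2$ plus an extra $\tfrac{\rho^2}{C}\sum_t\va_i[t]^2=\tfrac{\rho^2}{C}\lVert\va_i\rVert_2^2$. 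For the first piece, the key observation is that $\va_i^\top\mW$ is an orthogonal projection of $\mM_{[i,:i]}$, so $\lVert\va_i^\top\mW\rVert_2\le\lVert\mM_{[i,:]}\rVert_2\le\lVert\mM_{[i,:]}\rVert_1\le 1$ for every realization of $C_{:i,:i}$; conditioning on $C_{:i,:i}$ (independent of the client updates) and using the preliminary bound $\E\lVert\noise_s\rVert_2^2\le G^2$ gives $\E\lVert\sum_s(\va_i^\top\mW)_{[s]}\noise_s\rVert_2^2\le i G^2$. Summing over $i$, $\E\lVert\mE\mA^\top\rVert_F^2\le\tfrac12 G^2T^2+\tfrac{\rho^2}{C}\E\lVert\mA\rVert_F^2$. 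Substituting both bounds into the display above, dividing by $T^2$, and telescoping the constants gives the claimed inequality; Theorem~\ref{theorem:ma} follows as the special case where $\mW$ is full rank, so $r_i=i$, $\alpha_i=1$, $A(T)=0$, and the assumption $\E\lVert\mA\rVert_F^2=\gO(CT^2)$ collapses the last term, making the bound $\gO(G^2)$ once $\eta=\gO(1/\sqrt T)$.

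\emph{Main obstacle.} The delicate step is the approximation-bias term: identifying the least-squares residual with the projection onto the complementary singular subspace $\mV_{-r_i}$ and tying its magnitude to $\alpha_i$, while carrying along the fact that $r_i$, $\alpha_i$ and $\va_i$ are random through $C_{:i,:i}$. A secondary subtlety is verifying that Lemma~\ref{lemma:2} still applies with schedule-dependent coefficients $\va_i[t]$, and arranging the order of conditioning (on $C_{:i,:i}$ first) so that $\lVert\noise_s\rVert_2^2$ is replaced by $G^2$ while $\lVert\va_i^\top\mW\rVert_2^2$ is replaced by its deterministic bound $1$.
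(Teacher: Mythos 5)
Your proposal is correct and follows essentially the same route as the paper's proof: the same $2\eta^2T(\cdot)$ Frobenius-norm decomposition, the same identification of the least-squares residual with the projection $\mM_{[i,:i]}\mV_{-r_i}\mV_{-r_i}^\top$ bounded by $1-\alpha_i$, and the same application of Lemma~\ref{lemma:2} with coefficients $\va_i$ together with $\lVert\va_i^\top\mW\rVert_2\le\lVert\mM_{[i,:]}\rVert_2\le 1$ for the sampling-bias term. The conditioning subtleties you flag are handled implicitly in the paper in the same order you describe.
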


\begin{proof}
Similar to~\Cref{eq:async_bound_0}, we have:
\begin{align}
\E\lVert\param_{T+1}^\star - \param_{T+1}^\textnormal{MA}\rVert^2_2 
\leq 2\eta^2T(\E\lVert\mD^\star(\mA\mW-\mM)^\top\rVert^2_F + \E\lVert\mE\mA^\top\rVert^2_F) \label{eq:ma_bound_0}
\end{align}

Denote $\tilde{\mM} = \mA\mW$ and $\tilde{\mM}_{[i,:i]} = \va_i[:i]^\top\mW_{[:i,:i]} = \mM_{[i,:i]}\mW_{[:i,:i]}^+\mW_{[:i,:i]}$.
We first bound the implicit momentum bias term:
\begin{align}
\E\lVert\mD^\star(\mA\mW-\mM)^\top\rVert^2_F 
&= \sum_{i=1}^T\E\lVert(\tilde{\mM}_{[i,:]}-\mM_{[i,:]})\mD^{\star\top}\rVert^2_2 \nonumber \\
&\leq  \sum_{i=1}^TiS^2\E\lVert\tilde{\mM}_{[i,:i]}-\mM_{[i,:i]}\rVert^2_2 \nonumber\\
& = \sum_{i=1}^TiS^2\E\lVert\mM_{[i,:i]}(\mW_{[:i,:i]}^+\mW_{[:i,:i]} - \mI)\rVert^2_2 \nonumber \\
& = \sum_{i=1}^TiS^2\E\lVert\mM_{[i,:i]}\mV_{-r_i}\mV_{-r_i}^\top\rVert^2_2 \nonumber\\
& = \sum_{i=1}^TiS^2\E[\mM_{[i,:i]}\mV_{-r_i}\mV_{-r_i}^\top\mM_{[i,:i]}^\top] \nonumber \\
& \leq \sum_{i=1}^TiS^2\E[1-\alpha_i] = S^2A(T)  \label{eq:ma_bound_1}
\end{align}

We then bound the asynchronous sampling bias term.
Since $\mW^+\mW$ is a orthogonal projection operator, $\lVert\tilde{\mM}_{[i,:]}\rVert_2^2\leq \lVert\mM_{[i,:]}\rVert_2^2\leq 1$ and we have:

\begin{align}
\E\lVert\mE\mA^\top\rVert_F^2
&= \sum_{i=1}^T \E\lVert \mA_{[i, :]}\mE^\top \rVert_2^2 \nonumber \\
&= \sum_{i=1}^T \E_{\gH_{i}}\E_{\gK_{:i,:i}|\gH_i}[\lVert \sum_{t=1}^i\mA_{[i,t]}\sum_{s=1}^i \mW_{[t,s]}\noise_{t,s}\rVert_2^2] \nonumber  \\
&\stackrel{\text{Lemma~\ref{lemma:2}}}{\leq}  \sum_{i=1}^T \E[\lVert \sum_{s=1}^i \tilde{\mM}_{[i:]}\noise_{s}\rVert_2^2 + \frac{\rho^2}{C}\lVert\va_{i}\rVert_2^2] \nonumber  \\
&\leq \sum_{i=1}^T \E[i G^2\lVert\tilde{\mM}_{[i,:]}\rVert_2^2] + \frac{\rho^2}{C}\sum_{i=1}^T\E\lVert\va_{i}\rVert_2^2 \nonumber \\
&\leq \sum_{i=1}^T i G^2 + \frac{\rho^2}{C}\E\lVert\mA\rVert_F^2 \leq \frac{1}{2}T^2G^2 + \frac{\rho^2}{C}\E\lVert\mA\rVert_F^2.
\label{eq:ma_bound_2}
\end{align}

By substituting~\Cref{eq:ma_bound_1} and~(\ref{eq:ma_bound_2}) in~\Cref{eq:ma_bound_0} and telescoping the constant, we get:
\begin{align}
\E\lVert\frac{1}{T}(\param_{T+1}^\star - \param_{T+1}^\textnormal{MA})\rVert^2_2 
\leq \eta^2(2S^2\frac{A(T)}{T} + G^2T + 2\frac{\rho^2}{TC}\E\lVert\mA\rVert_F^2).
\end{align}
\end{proof}

When $\mW$ is full rank, $\alpha_i = 1$ for all iterations and $A(T) = 0$. 
When $\mW$ is not full rank, $\alpha_i$ is more likely to be closer to 1 when $r_i$ is high and leads to smaller $A(T)$. 
We discuss in Appendix~\ref{app:discuss} that how can down-scaling factor increases $r_i$ and reduces implicit momentum bias.

\subsection{Discussion}
\label{app:discuss}

\paragraph{Justification for Assumption~\ref{assumption:random}.}
We argue that in the production FL systems, the order of sampled clients' updates arriving at server is random does not dependent on their data size. 
In common deployed FL systems~\cite{bonawitz2019towards,paulik2021federated,huba2022papaya,wang2023flint}, the initiation of on-device training process is subject to a set of conditions: connected to power and wireless network, idle, and scheduled by device OS.
The timing of the event for all these conditions to be met is naturally nondeterministic rather than data dependent, e.g. a client might always charge their device at certain time of a day but OS scheduler might not always prioritize the FL training process and start training at exactly the time of charging.

Let $X_k$ denote the random variable for client $k$'s the starting time of training, and let $Y_k$ be the random variable of on-device training and network latency for submitting model update to server.
Then $X_k + Y_k$ is the time that $k$'s updates arrived at server.
When the variance of $X_k$ is dominating $X_k+Y_k$, the arrival order of sampled clients 
is not data dependent and random.
This is highly likely as the on-device training becomes extremely efficient with advances in hardware, e.g. training a modern neural network on an edge device takes only a few seconds~\cite{lin2020mcunet,lin2022device,cai2022enable}. 
To further enforce the arrival order to be random and less data dependent, we can also implement simple on-device logic such as enforcing the maximum amount of data to train on and injects a small random delay before on-device training~\cite{talwar2023samplable}.

\paragraph{Importance of the down-scaling factor.}
The down-scaling factor $(t-s+1)^{-p}$ for the stale model updates plays an important role when solving~\Cref{eq:obj}. 
We find that larger $p$ leads to higher rank of $\mW$ and smaller least square objective, and thus better momentum approximation as illustrated in Figure~\ref{fig:discuss}.
For $p=0.0$, the nullity ($i-r_i$) increases over iterations while $\mW_{[:i, :i]}$ is mostly full-rank when choosing a higher $p$. 
As a consequence, the relative least square error $1-\alpha_i$ is nearly zero for higher $p$.
However, we cannot set $p$ arbitrarily high as this would over-penalize the stale gradients and impact the model convergence.

\begin{figure}
\centering
\includegraphics[height=0.265\linewidth]{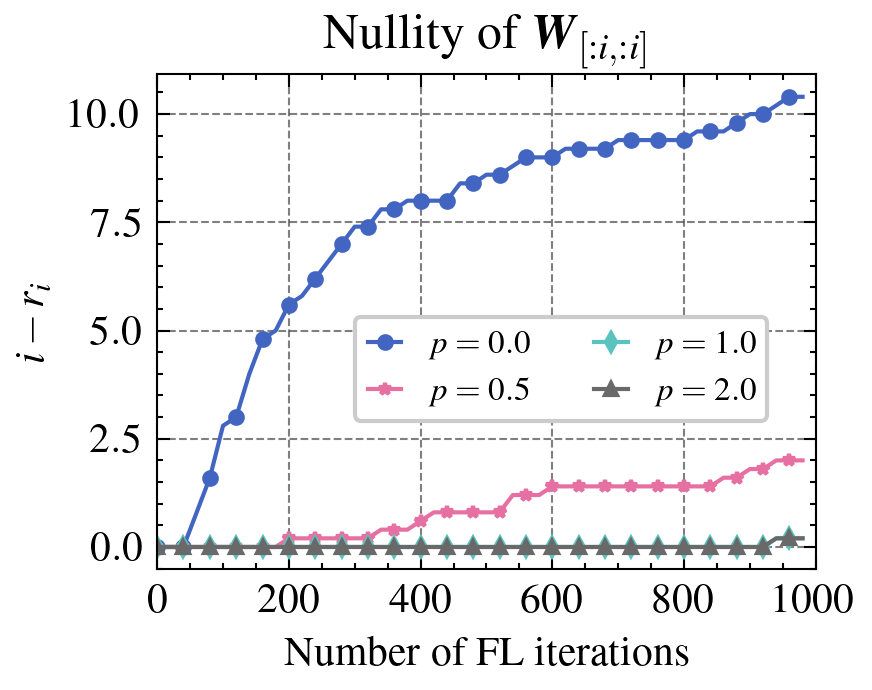}
\includegraphics[height=0.265\linewidth]{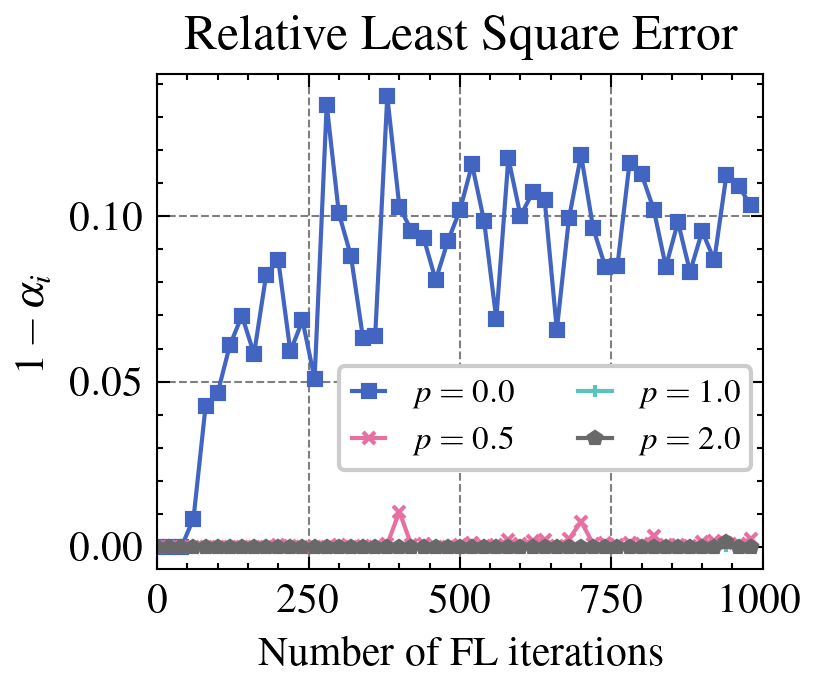}
\includegraphics[height=0.265\linewidth]{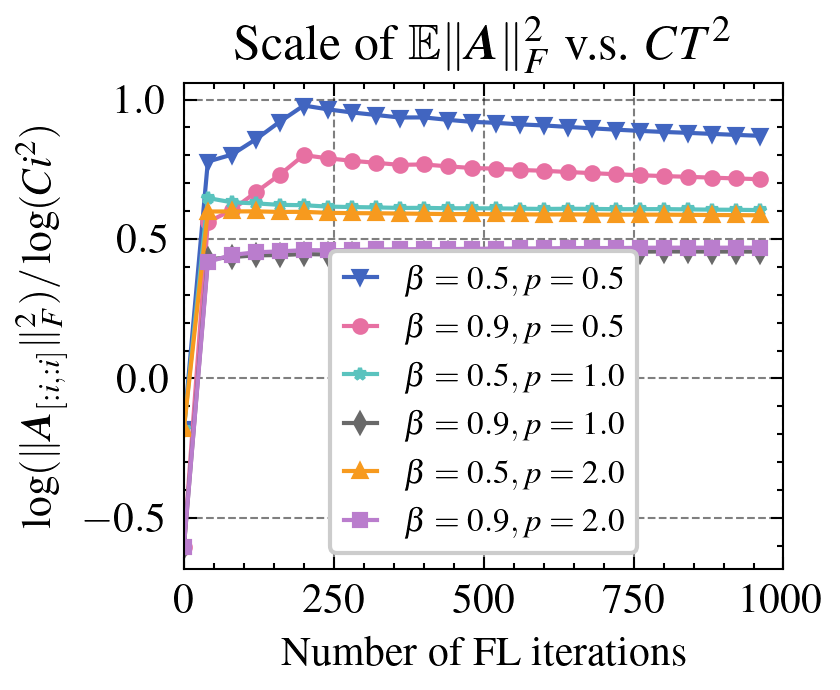}
\caption{(Left) Nullity ($i-r_i$) of $\mW_{[:i, :i]}$ over iterations for different $p$.
(Middle) $1-\alpha_i$ over iterations for different $p$. 
(Right) Scale of $\E\lVert\mA\rVert_F^2$ versus $CT^2$ in the log space over iterations for different $p$ and $\beta$.}
\label{fig:discuss}
\end{figure}

\paragraph{Scale of $\E\lVert\mA\rVert_F^2$.}
We empirically evaluate the condition in Theorem~\ref{theorem:ma} that $\E\lVert\mA\rVert_F^2=\gO(CT^2)$, i.e. $\E\lVert\mA\rVert_F^2$ should not grow faster than $CT^2$ when $T$ increases. 
Right of Figure~\ref{fig:discuss} shows the quantity $\log\lVert\mA_{[:i,:i]}\rVert_F^2 / \log Ci^2$ over iterations, where the quantity is less than 1 for different choices of $p$ and $\beta$, indicating that  $\E\lVert\mA\rVert_F^2$ grows slower than $CT^2$.
We also note that higher $p$ results in smaller $\E\lVert\mA\rVert_F^2$ leading to smaller sampling error in~\Cref{eq:ma_bound_main}.

\section{Limitations}
\label{app:limitation}
Our analysis in Theorem~\ref{theorem:async} and Theorem~\ref{theorem:ma} did not make any assumptions about the distribution of $\mW$. 
Though the analysis gives the results for an arbitrary $\mW$ but we note that the distribution of $\mW$ might have some special properties when using a particular client delay distribution. 
As discussed in Appendix~\ref{app:discuss}, we also empirically find the spectral properties of $\mW$ are associated the choice of down-scaling factor and its exponent $p$. 
These special properties of $\mW$, which we did not formalize, could potentially improve the theoretical results . 
We leave it to future work to explore the impact of the distribution and properties of $\mW$ on momentum in AsyncFL. 
We also did not analyze the impact of stale gradients on the bias in the second moments in optimizers like FedAdam as we acknowledged in Section~\ref{sec:method}.

\section{Broader Impact}
\label{app:impact}
This work does not have negative societal or ethical impact. 
On the contrary, this work can potentially benefit the society in terms of stronger privacy protection.
Our proposed method is compatible with secure aggregation and differential privacy, and can be easily integrated to existing asynchronous federated learning production systems.
We believe that our method can improve the applicability of asynchronous private federated learning to more on-device ML products where the data is highly personal and sensitive, and thus provide meaningful privacy guarantee to the end users.


\newpage
\section*{NeurIPS Paper Checklist}

The checklist is designed to encourage best practices for responsible machine learning research, addressing issues of reproducibility, transparency, research ethics, and societal impact. Do not remove the checklist: {\bf The papers not including the checklist will be desk rejected.} The checklist should follow the references and follow the (optional) supplemental material.  The checklist does NOT count towards the page
limit. 

Please read the checklist guidelines carefully for information on how to answer these questions. For each question in the checklist:
\begin{itemize}
    \item You should answer \answerYes{}, \answerNo{}, or \answerNA{}.
    \item \answerNA{} means either that the question is Not Applicable for that particular paper or the relevant information is Not Available.
    \item Please provide a short (1–2 sentence) justification right after your answer (even for NA). 
\end{itemize}

{\bf The checklist answers are an integral part of your paper submission.} They are visible to the reviewers, area chairs, senior area chairs, and ethics reviewers. You will be asked to also include it (after eventual revisions) with the final version of your paper, and its final version will be published with the paper.

The reviewers of your paper will be asked to use the checklist as one of the factors in their evaluation. While "\answerYes{}" is generally preferable to "\answerNo{}", it is perfectly acceptable to answer "\answerNo{}" provided a proper justification is given (e.g., "error bars are not reported because it would be too computationally expensive" or "we were unable to find the license for the dataset we used"). In general, answering "\answerNo{}" or "\answerNA{}" is not grounds for rejection. While the questions are phrased in a binary way, we acknowledge that the true answer is often more nuanced, so please just use your best judgment and write a justification to elaborate. All supporting evidence can appear either in the main paper or the supplemental material, provided in appendix. If you answer \answerYes{} to a question, in the justification please point to the section(s) where related material for the question can be found.

IMPORTANT, please:
\begin{itemize}
    \item {\bf Delete this instruction block, but keep the section heading ``NeurIPS paper checklist"},
    \item  {\bf Keep the checklist subsection headings, questions/answers and guidelines below.}
    \item {\bf Do not modify the questions and only use the provided macros for your answers}.
\end{itemize}


\begin{enumerate}

\item {\bf Claims}
    \item[] Question: Do the main claims made in the abstract and introduction accurately reflect the paper's contributions and scope?
    \item[] Answer: \answerYes{} 
    \item[] Justification: The claims in the abstract and introduction are reflected in both theoretical and empirical results.
    \item[] Guidelines:
    \begin{itemize}
        \item The answer NA means that the abstract and introduction do not include the claims made in the paper.
        \item The abstract and/or introduction should clearly state the claims made, including the contributions made in the paper and important assumptions and limitations. A No or NA answer to this question will not be perceived well by the reviewers. 
        \item The claims made should match theoretical and experimental results, and reflect how much the results can be expected to generalize to other settings. 
        \item It is fine to include aspirational goals as motivation as long as it is clear that these goals are not attained by the paper. 
    \end{itemize}

\item {\bf Limitations}
    \item[] Question: Does the paper discuss the limitations of the work performed by the authors?
    \item[] Answer: \answerYes{} 
    \item[] Justification: Limitations are discussed in Appendix~\ref{app:limitation}.
    \item[] Guidelines:
    \begin{itemize}
        \item The answer NA means that the paper has no limitation while the answer No means that the paper has limitations, but those are not discussed in the paper. 
        \item The authors are encouraged to create a separate "Limitations" section in their paper.
        \item The paper should point out any strong assumptions and how robust the results are to violations of these assumptions (e.g., independence assumptions, noiseless settings, model well-specification, asymptotic approximations only holding locally). The authors should reflect on how these assumptions might be violated in practice and what the implications would be.
        \item The authors should reflect on the scope of the claims made, e.g., if the approach was only tested on a few datasets or with a few runs. In general, empirical results often depend on implicit assumptions, which should be articulated.
        \item The authors should reflect on the factors that influence the performance of the approach. For example, a facial recognition algorithm may perform poorly when image resolution is low or images are taken in low lighting. Or a speech-to-text system might not be used reliably to provide closed captions for online lectures because it fails to handle technical jargon.
        \item The authors should discuss the computational efficiency of the proposed algorithms and how they scale with dataset size.
        \item If applicable, the authors should discuss possible limitations of their approach to address problems of privacy and fairness.
        \item While the authors might fear that complete honesty about limitations might be used by reviewers as grounds for rejection, a worse outcome might be that reviewers discover limitations that aren't acknowledged in the paper. The authors should use their best judgment and recognize that individual actions in favor of transparency play an important role in developing norms that preserve the integrity of the community. Reviewers will be specifically instructed to not penalize honesty concerning limitations.
    \end{itemize}

\item {\bf Theory Assumptions and Proofs}
    \item[] Question: For each theoretical result, does the paper provide the full set of assumptions and a complete (and correct) proof?
    \item[] Answer: \answerYes{} 
    \item[] Justification: Assumptions are provided in Section~\ref{section:ma} and proofs are provided in Appendix~\ref{app:proofs}.
    \item[] Guidelines:
    \begin{itemize}
        \item The answer NA means that the paper does not include theoretical results. 
        \item All the theorems, formulas, and proofs in the paper should be numbered and cross-referenced.
        \item All assumptions should be clearly stated or referenced in the statement of any theorems.
        \item The proofs can either appear in the main paper or the supplemental material, but if they appear in the supplemental material, the authors are encouraged to provide a short proof sketch to provide intuition. 
        \item Inversely, any informal proof provided in the core of the paper should be complemented by formal proofs provided in appendix or supplemental material.
        \item Theorems and Lemmas that the proof relies upon should be properly referenced. 
    \end{itemize}

    \item {\bf Experimental Result Reproducibility}
    \item[] Question: Does the paper fully disclose all the information needed to reproduce the main experimental results of the paper to the extent that it affects the main claims and/or conclusions of the paper (regardless of whether the code and data are provided or not)?
    \item[] Answer: \answerYes{} 
    \item[] Justification: Detailed description of hyperparameters and setup is provided in Appendix~\ref{sec:hyperparams}. We plan to open source the code and data for reproducing the results in the paper in the near future.
    \item[] Guidelines:
    \begin{itemize}
        \item The answer NA means that the paper does not include experiments.
        \item If the paper includes experiments, a No answer to this question will not be perceived well by the reviewers: Making the paper reproducible is important, regardless of whether the code and data are provided or not.
        \item If the contribution is a dataset and/or model, the authors should describe the steps taken to make their results reproducible or verifiable. 
        \item Depending on the contribution, reproducibility can be accomplished in various ways. For example, if the contribution is a novel architecture, describing the architecture fully might suffice, or if the contribution is a specific model and empirical evaluation, it may be necessary to either make it possible for others to replicate the model with the same dataset, or provide access to the model. In general. releasing code and data is often one good way to accomplish this, but reproducibility can also be provided via detailed instructions for how to replicate the results, access to a hosted model (e.g., in the case of a large language model), releasing of a model checkpoint, or other means that are appropriate to the research performed.
        \item While NeurIPS does not require releasing code, the conference does require all submissions to provide some reasonable avenue for reproducibility, which may depend on the nature of the contribution. For example
        \begin{enumerate}
            \item If the contribution is primarily a new algorithm, the paper should make it clear how to reproduce that algorithm.
            \item If the contribution is primarily a new model architecture, the paper should describe the architecture clearly and fully.
            \item If the contribution is a new model (e.g., a large language model), then there should either be a way to access this model for reproducing the results or a way to reproduce the model (e.g., with an open-source dataset or instructions for how to construct the dataset).
            \item We recognize that reproducibility may be tricky in some cases, in which case authors are welcome to describe the particular way they provide for reproducibility. In the case of closed-source models, it may be that access to the model is limited in some way (e.g., to registered users), but it should be possible for other researchers to have some path to reproducing or verifying the results.
        \end{enumerate}
    \end{itemize}

\item {\bf Open access to data and code}
    \item[] Question: Does the paper provide open access to the data and code, with sufficient instructions to faithfully reproduce the main experimental results, as described in supplemental material?
    \item[] Answer: \answerNo{} 
    \item[] Justification: We plan to open source the code and data for reproducing the results in the paper in the near future.
    \item[] Guidelines:
    \begin{itemize}
        \item The answer NA means that paper does not include experiments requiring code.
        \item Please see the NeurIPS code and data submission guidelines (\url{https://nips.cc/public/guides/CodeSubmissionPolicy}) for more details.
        \item While we encourage the release of code and data, we understand that this might not be possible, so “No” is an acceptable answer. Papers cannot be rejected simply for not including code, unless this is central to the contribution (e.g., for a new open-source benchmark).
        \item The instructions should contain the exact command and environment needed to run to reproduce the results. See the NeurIPS code and data submission guidelines (\url{https://nips.cc/public/guides/CodeSubmissionPolicy}) for more details.
        \item The authors should provide instructions on data access and preparation, including how to access the raw data, preprocessed data, intermediate data, and generated data, etc.
        \item The authors should provide scripts to reproduce all experimental results for the new proposed method and baselines. If only a subset of experiments are reproducible, they should state which ones are omitted from the script and why.
        \item At submission time, to preserve anonymity, the authors should release anonymized versions (if applicable).
        \item Providing as much information as possible in supplemental material (appended to the paper) is recommended, but including URLs to data and code is permitted.
    \end{itemize}

\item {\bf Experimental Setting/Details}
    \item[] Question: Does the paper specify all the training and test details (e.g., data splits, hyperparameters, how they were chosen, type of optimizer, etc.) necessary to understand the results?
    \item[] Answer: \answerYes{} 
    \item[] Justification: Detailed description of hyperparameters and setup is provided in Appendix~\ref{sec:hyperparams}. We also separately analyzed the impact of some critical hyperparameters in the algorithms.
    \item[] Guidelines:
    \begin{itemize}
        \item The answer NA means that the paper does not include experiments.
        \item The experimental setting should be presented in the core of the paper to a level of detail that is necessary to appreciate the results and make sense of them.
        \item The full details can be provided either with the code, in appendix, or as supplemental material.
    \end{itemize}

\item {\bf Experiment Statistical Significance}
    \item[] Question: Does the paper report error bars suitably and correctly defined or other appropriate information about the statistical significance of the experiments?
    \item[] Answer: \answerNo{} 
    \item[] Justification: The statistical significance is not available momentarily as the experiments are expensive and time-consuming to run. We plan to include the error bars for all results in the next updated version.
    \item[] Guidelines:
    \begin{itemize}
        \item The answer NA means that the paper does not include experiments.
        \item The authors should answer "Yes" if the results are accompanied by error bars, confidence intervals, or statistical significance tests, at least for the experiments that support the main claims of the paper.
        \item The factors of variability that the error bars are capturing should be clearly stated (for example, train/test split, initialization, random drawing of some parameter, or overall run with given experimental conditions).
        \item The method for calculating the error bars should be explained (closed form formula, call to a library function, bootstrap, etc.)
        \item The assumptions made should be given (e.g., Normally distributed errors).
        \item It should be clear whether the error bar is the standard deviation or the standard error of the mean.
        \item It is OK to report 1-sigma error bars, but one should state it. The authors should preferably report a 2-sigma error bar than state that they have a 96\% CI, if the hypothesis of Normality of errors is not verified.
        \item For asymmetric distributions, the authors should be careful not to show in tables or figures symmetric error bars that would yield results that are out of range (e.g. negative error rates).
        \item If error bars are reported in tables or plots, The authors should explain in the text how they were calculated and reference the corresponding figures or tables in the text.
    \end{itemize}

\item {\bf Experiments Compute Resources}
    \item[] Question: For each experiment, does the paper provide sufficient information on the computer resources (type of compute workers, memory, time of execution) needed to reproduce the experiments?
    \item[] Answer: \answerYes{} 
    \item[] Justification: The resources are described in Appendix~\ref{sec:hyperparams}.
    \item[] Guidelines:
    \begin{itemize}
        \item The answer NA means that the paper does not include experiments.
        \item The paper should indicate the type of compute workers CPU or GPU, internal cluster, or cloud provider, including relevant memory and storage.
        \item The paper should provide the amount of compute required for each of the individual experimental runs as well as estimate the total compute. 
        \item The paper should disclose whether the full research project required more compute than the experiments reported in the paper (e.g., preliminary or failed experiments that didn't make it into the paper). 
    \end{itemize}
    
\item {\bf Code Of Ethics}
    \item[] Question: Does the research conducted in the paper conform, in every respect, with the NeurIPS Code of Ethics \url{https://neurips.cc/public/EthicsGuidelines}?
    \item[] Answer: \answerYes{} 
    \item[] Justification: The authors have reviewed and the research conform with the the NeurIPS Code of Ethics.
    \item[] Guidelines:
    \begin{itemize}
        \item The answer NA means that the authors have not reviewed the NeurIPS Code of Ethics.
        \item If the authors answer No, they should explain the special circumstances that require a deviation from the Code of Ethics.
        \item The authors should make sure to preserve anonymity (e.g., if there is a special consideration due to laws or regulations in their jurisdiction).
    \end{itemize}

\item {\bf Broader Impacts}
    \item[] Question: Does the paper discuss both potential positive societal impacts and negative societal impacts of the work performed?
    \item[] Answer: \answerYes{} 
    \item[] Justification: The broader impacts of this paper is discussed in Appendix~\ref{app:impact}.
    \item[] Guidelines:
    \begin{itemize}
        \item The answer NA means that there is no societal impact of the work performed.
        \item If the authors answer NA or No, they should explain why their work has no societal impact or why the paper does not address societal impact.
        \item Examples of negative societal impacts include potential malicious or unintended uses (e.g., disinformation, generating fake profiles, surveillance), fairness considerations (e.g., deployment of technologies that could make decisions that unfairly impact specific groups), privacy considerations, and security considerations.
        \item The conference expects that many papers will be foundational research and not tied to particular applications, let alone deployments. However, if there is a direct path to any negative applications, the authors should point it out. For example, it is legitimate to point out that an improvement in the quality of generative models could be used to generate deepfakes for disinformation. On the other hand, it is not needed to point out that a generic algorithm for optimizing neural networks could enable people to train models that generate Deepfakes faster.
        \item The authors should consider possible harms that could arise when the technology is being used as intended and functioning correctly, harms that could arise when the technology is being used as intended but gives incorrect results, and harms following from (intentional or unintentional) misuse of the technology.
        \item If there are negative societal impacts, the authors could also discuss possible mitigation strategies (e.g., gated release of models, providing defenses in addition to attacks, mechanisms for monitoring misuse, mechanisms to monitor how a system learns from feedback over time, improving the efficiency and accessibility of ML).
    \end{itemize}
    
\item {\bf Safeguards}
    \item[] Question: Does the paper describe safeguards that have been put in place for responsible release of data or models that have a high risk for misuse (e.g., pretrained language models, image generators, or scraped datasets)?
    \item[] Answer: \answerNA{} 
    \item[] Justification: This question is not relevant for this paper.
    \item[] Guidelines:
    \begin{itemize}
        \item The answer NA means that the paper poses no such risks.
        \item Released models that have a high risk for misuse or dual-use should be released with necessary safeguards to allow for controlled use of the model, for example by requiring that users adhere to usage guidelines or restrictions to access the model or implementing safety filters. 
        \item Datasets that have been scraped from the Internet could pose safety risks. The authors should describe how they avoided releasing unsafe images.
        \item We recognize that providing effective safeguards is challenging, and many papers do not require this, but we encourage authors to take this into account and make a best faith effort.
    \end{itemize}

\item {\bf Licenses for existing assets}
    \item[] Question: Are the creators or original owners of assets (e.g., code, data, models), used in the paper, properly credited and are the license and terms of use explicitly mentioned and properly respected?
    \item[] Answer: \answerYes{} 
    \item[] Justification: The authors have cited the original papers that produced the datasets.
    \item[] Guidelines:
    \begin{itemize}
        \item The answer NA means that the paper does not use existing assets.
        \item The authors should cite the original paper that produced the code package or dataset.
        \item The authors should state which version of the asset is used and, if possible, include a URL.
        \item The name of the license (e.g., CC-BY 4.0) should be included for each asset.
        \item For scraped data from a particular source (e.g., website), the copyright and terms of service of that source should be provided.
        \item If assets are released, the license, copyright information, and terms of use in the package should be provided. For popular datasets, \url{paperswithcode.com/datasets} has curated licenses for some datasets. Their licensing guide can help determine the license of a dataset.
        \item For existing datasets that are re-packaged, both the original license and the license of the derived asset (if it has changed) should be provided.
        \item If this information is not available online, the authors are encouraged to reach out to the asset's creators.
    \end{itemize}

\item {\bf New Assets}
    \item[] Question: Are new assets introduced in the paper well documented and is the documentation provided alongside the assets?
    \item[] Answer: \answerNA{} 
    \item[] Justification: This paper does not release new assets.
    \item[] Guidelines:
    \begin{itemize}
        \item The answer NA means that the paper does not release new assets.
        \item Researchers should communicate the details of the dataset/code/model as part of their submissions via structured templates. This includes details about training, license, limitations, etc. 
        \item The paper should discuss whether and how consent was obtained from people whose asset is used.
        \item At submission time, remember to anonymize your assets (if applicable). You can either create an anonymized URL or include an anonymized zip file.
    \end{itemize}

\item {\bf Crowdsourcing and Research with Human Subjects}
    \item[] Question: For crowdsourcing experiments and research with human subjects, does the paper include the full text of instructions given to participants and screenshots, if applicable, as well as details about compensation (if any)? 
    \item[] Answer: \answerNA{} 
    \item[] Justification: This paper does not involve crowdsourcing nor research with human subjects.
    \item[] Guidelines:
    \begin{itemize}
        \item The answer NA means that the paper does not involve crowdsourcing nor research with human subjects.
        \item Including this information in the supplemental material is fine, but if the main contribution of the paper involves human subjects, then as much detail as possible should be included in the main paper. 
        \item According to the NeurIPS Code of Ethics, workers involved in data collection, curation, or other labor should be paid at least the minimum wage in the country of the data collector. 
    \end{itemize}

\item {\bf Institutional Review Board (IRB) Approvals or Equivalent for Research with Human Subjects}
    \item[] Question: Does the paper describe potential risks incurred by study participants, whether such risks were disclosed to the subjects, and whether Institutional Review Board (IRB) approvals (or an equivalent approval/review based on the requirements of your country or institution) were obtained?
    \item[] Answer: \answerNA{} 
    \item[] Justification: This paper does not involve crowdsourcing nor research with human subjects.
    \item[] Guidelines:
    \begin{itemize}
        \item The answer NA means that the paper does not involve crowdsourcing nor research with human subjects.
        \item Depending on the country in which research is conducted, IRB approval (or equivalent) may be required for any human subjects research. If you obtained IRB approval, you should clearly state this in the paper. 
        \item We recognize that the procedures for this may vary significantly between institutions and locations, and we expect authors to adhere to the NeurIPS Code of Ethics and the guidelines for their institution. 
        \item For initial submissions, do not include any information that would break anonymity (if applicable), such as the institution conducting the review.
    \end{itemize}

\end{enumerate}

\end{document}